\documentclass[11pt,letterpaper]{article}

\usepackage[letterpaper,margin=1in]{geometry}
\usepackage[utf8]{inputenc}
\usepackage[T1]{fontenc}
\usepackage{lmodern}
\usepackage{microtype}
\usepackage{acs-preprint}
\preprintshorttitle{Divergence Timing and Cumulative Disagreement under KV-Cache Eviction}
\preprintauthors{Luo and Yu}
\usepackage{amsmath,amssymb,amsthm,mathtools}
\usepackage{booktabs,array}
\usepackage{graphicx,subcaption}
\usepackage{algorithm,algpseudocode}
\usepackage{enumitem,xcolor,float,placeins}
\usepackage{tikz}
\usepackage[numbers,sort&compress]{natbib}
\usepackage{xurl}
\usepackage{hyperref}
\graphicspath{{figures/}}
\definecolor{figred}{HTML}{BC4A36}
\definecolor{figblue}{HTML}{236A9D}
\definecolor{figpurple}{HTML}{775496}
\definecolor{figexposure}{HTML}{3E8191}
\definecolor{figrate}{HTML}{D39A56}
\definecolor{figfloor}{HTML}{596270}
\newcommand{\figurefont}{\fontsize{10}{11.5}\selectfont}
\newcommand{\figline}[1]{\tikz[baseline=-.5ex]\draw[#1,line width=.9pt](0,0)--(.35,0);}
\newcommand{\figblock}[1]{\tikz[baseline=-.5ex]\fill[#1](0,-.04)rectangle(.30,.12);}
\newcommand{\actionlegend}{\figurefont\figline{figred}\ SnapKV-512\quad
 \figline{figblue}\ SnapKV-50\%\quad\figline{figpurple}\ Recent-50\%}
\makeatletter
\renewcommand{\theHALG@line}{\thealgorithm.\arabic{ALG@line}}
\makeatother
\hypersetup{
  colorlinks=true,
  linkcolor=blue!50!black,
  citecolor=green!35!black,
  urlcolor=blue!55!black,
  pdftitle={Divergence Timing and Cumulative Disagreement under KV-Cache Eviction},
  pdfauthor={Xinyue Luo, Fei Yu}
}

\newtheorem{theorem}{Theorem}[section]
\newtheorem{proposition}[theorem]{Proposition}

\newtheorem{corollary}[theorem]{Corollary}
\theoremstyle{definition}
\newtheorem{definition}[theorem]{Definition}
\theoremstyle{remark}

\newcommand{\E}{\mathbb{E}}
\newcommand{\Pcal}{\mathbb{P}}
\newcommand{\TV}{\operatorname{TV}}
\newcommand{\one}{\mathbf{1}}
\newcommand{\Rsmc}{R_{\mathrm{SMC}}}

\title{Divergence Timing and Cumulative Disagreement\\
under KV-Cache Eviction}
\author{Xinyue Luo\quad Fei Yu\thanks{Corresponding author.}\\[3pt]
  \normalsize Ant Group\\[2pt]
  \small\href{mailto:mangduo.lxy@antgroup.com}{mangduo.lxy@antgroup.com}\quad
  \href{mailto:fred.yf@antgroup.com}{fred.yf@antgroup.com}}
\date{}

\begin{document}
\maketitle
\begin{abstract}
KV-cache eviction perturbs the conditional token distributions governing
autoregressive generation. We investigate how first-divergence timing and
subsequent token mismatch determine cumulative disagreement.
We derive an exact decomposition under a specified stepwise maximal coupling:
the expected mismatch fraction equals a first-mismatch contribution plus post-divergence
exposure multiplied by its mismatch rate. An explicit construction over
unrestricted autoregressive kernel pairs realizes the sharp interval of
risks compatible with a finite divergence-aligned observation window.
Residual-branch conditional Monte Carlo provides unbiased joint estimates
of occurrence, occupation, and window/tail contributions, with per-replicate
variance dominance for total token loss. Complete trajectories from
Meta-Llama-3.1-8B-Instruct and Qwen2.5-7B-Instruct show that SnapKV at
50\% retention enters divergence later and less often than SnapKV-512
or recent-token retention with the same 50\% prompt-cache budget, while
post-divergence total variation (TV) remains high. In an exploratory
analysis of 288 documents, post-divergence exposure accounts for 85--90\%
of four aggregate mismatch gaps. On 288 independent documents at 90\%
retention, prespecified comparisons show higher branch-aligned TV in the
late than in the early window in both models.
\end{abstract}

\section{Introduction}
\label{sec:introduction}

KV-cache eviction changes a language model's continuation by changing the
information retained during decoding. Attention, retrieval structure, and
forward-looking signals guide which entries to keep
\citep{li2024snapkv,fu2025headkv,tang2025razorattention,kai2026infokv,ahn2026lookaheadkv}.
The resulting perturbation unfolds over time: two continuations can agree
initially, diverge at a later token, and then spend different amounts of
time in disagreement. Our question is how \emph{entry into divergence}
and \emph{disagreement after entry} jointly determine the cumulative
consequences of eviction.

A simple example makes the distinction exact. Let a reference always
emit zero. Two interventions emit a first-position one with probability
\(\varepsilon\). Conditional on that departure, they continue with zeros
or ones, respectively; otherwise both continue with zeros. Both have the same
first-divergence law and sequence total variation (TV), \(\varepsilon\),
but their expected mismatch fractions are \(\varepsilon/H\) and
\(\varepsilon\). The first departure and the occupation of disagreement
describe different aspects of generation (Figure~\ref{fig:theory-story}).
This distinction gives a trajectory-level perspective on error accumulation
and self-recovery in autoregressive models
\citep{he2021exposure,arora2022exposure}.

\begin{figure}[t]
\centering
\begin{subfigure}[b]{.485\linewidth}
\centering\begin{tikzpicture}[x=1cm,y=1cm,font=\figurefont,line cap=round]
\path[use as bounding box] (0,0) rectangle (6.7,2.05);
\node[inner sep=0pt,] at (3.35000,1.92000) {$H=6,\quad \Pr(\tau=1)=\varepsilon$};
\node[inner sep=0pt,anchor=east] at (1.65000,1.40000) {Short};
\filldraw[fill=figfloor,draw=figfloor,line width=.5pt] (2.05000,1.40000) circle[radius=.115cm];
\filldraw[fill=white,draw=black!35,line width=.5pt] (2.60000,1.40000) circle[radius=.115cm];
\filldraw[fill=white,draw=black!35,line width=.5pt] (3.15000,1.40000) circle[radius=.115cm];
\filldraw[fill=white,draw=black!35,line width=.5pt] (3.70000,1.40000) circle[radius=.115cm];
\filldraw[fill=white,draw=black!35,line width=.5pt] (4.25000,1.40000) circle[radius=.115cm];
\filldraw[fill=white,draw=black!35,line width=.5pt] (4.80000,1.40000) circle[radius=.115cm];
\node[inner sep=0pt,] at (5.95000,1.40000) {$\varepsilon/6$};
\node[inner sep=0pt,anchor=east] at (1.65000,0.85000) {Recurring};
\filldraw[fill=figfloor,draw=figfloor,line width=.5pt] (2.05000,0.85000) circle[radius=.115cm];
\filldraw[fill=white,draw=black!35,line width=.5pt] (2.60000,0.85000) circle[radius=.115cm];
\filldraw[fill=figexposure,draw=figexposure,line width=.5pt] (3.15000,0.85000) circle[radius=.115cm];
\filldraw[fill=white,draw=black!35,line width=.5pt] (3.70000,0.85000) circle[radius=.115cm];
\filldraw[fill=figexposure,draw=figexposure,line width=.5pt] (4.25000,0.85000) circle[radius=.115cm];
\filldraw[fill=white,draw=black!35,line width=.5pt] (4.80000,0.85000) circle[radius=.115cm];
\node[inner sep=0pt,] at (5.95000,0.85000) {$\varepsilon/2$};
\node[inner sep=0pt,anchor=east] at (1.65000,0.30000) {Persistent};
\filldraw[fill=figfloor,draw=figfloor,line width=.5pt] (2.05000,0.30000) circle[radius=.115cm];
\filldraw[fill=figexposure,draw=figexposure,line width=.5pt] (2.60000,0.30000) circle[radius=.115cm];
\filldraw[fill=figexposure,draw=figexposure,line width=.5pt] (3.15000,0.30000) circle[radius=.115cm];
\filldraw[fill=figexposure,draw=figexposure,line width=.5pt] (3.70000,0.30000) circle[radius=.115cm];
\filldraw[fill=figexposure,draw=figexposure,line width=.5pt] (4.25000,0.30000) circle[radius=.115cm];
\filldraw[fill=figexposure,draw=figexposure,line width=.5pt] (4.80000,0.30000) circle[radius=.115cm];
\node[inner sep=0pt,] at (5.95000,0.30000) {$\varepsilon$};
\end{tikzpicture}
\caption{Same entry, different occupation}
\end{subfigure}\hfill
\begin{subfigure}[b]{.485\linewidth}
\centering\begin{tikzpicture}[x=1cm,y=1cm,font=\figurefont,line cap=round]
\path[use as bounding box] (0,0) rectangle (6.7,2.05);
\node[inner sep=0pt,] at (3.35000,1.92000) {Illustrative path: $H=8,\ \tau=3$};
\path[fill=figexposure!12] (2.85000,0.75000) rectangle (6.45000,1.35000);
\filldraw[fill=white,draw=black!35,line width=.5pt] (0.55000,1.05000) circle[radius=.115cm];
\filldraw[fill=white,draw=black!35,line width=.5pt] (1.35000,1.05000) circle[radius=.115cm];
\filldraw[fill=figfloor,draw=figfloor,line width=.5pt] (2.15000,1.05000) circle[radius=.115cm];
\filldraw[fill=white,draw=black!35,line width=.5pt] (2.95000,1.05000) circle[radius=.115cm];
\filldraw[fill=figexposure,draw=figexposure,line width=.5pt] (3.75000,1.05000) circle[radius=.115cm];
\filldraw[fill=white,draw=black!35,line width=.5pt] (4.55000,1.05000) circle[radius=.115cm];
\filldraw[fill=figexposure,draw=figexposure,line width=.5pt] (5.35000,1.05000) circle[radius=.115cm];
\filldraw[fill=figexposure,draw=figexposure,line width=.5pt] (6.15000,1.05000) circle[radius=.115cm];
\node[inner sep=0pt,] at (0.55000,0.58000) {1};
\node[inner sep=0pt,] at (1.35000,0.58000) {2};
\node[inner sep=0pt,] at (2.15000,0.58000) {3};
\node[inner sep=0pt,] at (2.95000,0.58000) {4};
\node[inner sep=0pt,] at (3.75000,0.58000) {5};
\node[inner sep=0pt,] at (4.55000,0.58000) {6};
\node[inner sep=0pt,] at (5.35000,0.58000) {7};
\node[inner sep=0pt,] at (6.15000,0.58000) {8};
\node[inner sep=0pt,] at (3.35000,0.10000) {5 exposed steps; 3 later mismatches};
\end{tikzpicture}
\caption{Entry sets the time available}
\end{subfigure}
\par\smallskip
\(\Rsmc=O_H+E_H C_H\)\quad
(first mismatch + exposure \(\times\) mismatch rate)
\caption{First divergence and subsequent occupation jointly determine cumulative
disagreement. Filled/open circles denote token mismatches/matches.
(a) Constructive paths conditional on departure; otherwise all tokens match.
Identical divergence laws give different expected losses. (b) An illustrative
path has five exposed steps, including two local matches. The identity averages
over complete trajectories (Equation~\ref{eq:exposure-accounting}).}
\label{fig:theory-story}
\end{figure}

We make the trajectory comparison precise through recursive
overlap--residual maximal coupling
\citep{goldstein1979maximal,levin2017markov}.
Reference kernels \(P\) and intervention kernels \(Q^a\) generate paired
continuations, each with its original marginal law. Their cumulative
positional disagreement is
\begin{equation}
 \Rsmc(a)=\E\!\left[\frac1H\sum_{t=1}^H
 \one\{X_t\ne Y_t^a\}\right].
 \label{eq:intro-risk}
\end{equation}
First divergence separates mismatch at entry from subsequent occupation,
which factors into post-divergence exposure and its mismatch rate.
For an exact profile of the divergence law and the first \(L\) mismatch
marginals of each branch, we construct, over unrestricted autoregressive
kernel pairs, every compatible risk in \([R_L,R_L+B_L]\).
Here \(R_L\) is the observed contribution and \(B_L\) the remaining
probability-weighted trajectory capacity. Residual-branch conditional
Monte Carlo jointly estimates occurrence, occupation, and tail contributions
under the same coupling.

Complete trajectories from two language models show how these components
interact under persistent eviction. In an exploratory study, SnapKV at
50\% retention enters divergence later and less often than SnapKV-512
or recent-token retention with the same 50\% prompt-cache budget; exposure
contributes 85--90\% of four aggregate mismatch gaps. Within each action's
fixed cohort with 64 observable post-divergence steps, late-window TV
exceeds early-window TV. An independent study at 90\% retention finds the
same direction in prespecified branch comparisons. These temporal patterns
explain how smaller cumulative differences coexist with high discrepancy
after divergence.

\section{Related Work}
\label{sec:related-work}

KV-cache compression uses attention sinks, heavy hitters, retrieval heads,
observation windows, and heterogeneous budgets to select retained state
\citep{xiao2024streamingllm,zhang2023h2o,li2024snapkv,fu2025headkv,
feng2025adakv,qin2025cake}.
Quantization and low-rank projection offer complementary compression
axes~\citep{liu2024kivi,chang2025palu}; serving evaluations study their
quality--memory--latency trade-offs~\citep{wei2025rethinking}.
InfoKV evaluates the influence of KV removal on future predictive
distributions along a fixed token sequence, while LookaheadKV uses
learned predictors of future attention
\citep{kai2026infokv,ahn2026lookaheadkv}.
Worst-case space bounds characterize compression limits under general
attention inputs~\citep{haris2025barriers}.

Feedback, error accumulation, and self-recovery have been studied in
sequential prediction and language generation
\citep{ross2011dagger,bengio2015scheduled,he2021exposure,arora2022exposure}.
Sequence TV estimation~\citep{price2026tv} measures distance between marginal
sequence laws; maximal agreement couplings
\citep{vollering2016maximal,ernst2019mexit} maximize the probability of
agreement through each time. One-step maximal
coupling realizes TV as mismatch probability~\citep{goldstein1979maximal};
token couplings also support speculative decoding
\citep{leviathan2023speculative,sun2023spectr} and control generation randomness
in LLM evaluation~\citep{benz2026coupled}.

Partial identification characterizes the target values compatible with
available observations
\citep{manski2003partial,stoye2007minimax,khan2024offpolicy}.
Conditional Monte Carlo reduces simulation variance by integrating out
components of the sampling randomness
\citep{blackwell1947conditional,casella1996raoblackwellisation,glasserman2001survival}.
The present study examines how first-divergence timing and subsequent
mismatch jointly determine cumulative discrepancy under persistent
KV-cache eviction.

\section{From Divergence Timing to Cumulative Disagreement}
\label{sec:information}

First-divergence timing and subsequent mismatch determine different parts
of cumulative discrepancy. We separate these contributions under a specified
coupling, then characterize the risks compatible with a finite observation
window after divergence.

\subsection{Divergence timing, exposure, and occupation}
\label{sec:timing-occupation}

Fix a prompt, an intervention \(a\), a finite token alphabet \(\mathcal V\),
and an integer horizon \(H\ge1\). Empty sums are zero, and conditional
quantities on null events are zero. For histories
\(x,y\in\mathcal V^{t-1}\), write
\[
 p(u)=P_t(u\mid x),\quad q(u)=Q_t^a(u\mid y),\quad
 c(u)=\min\{p(u),q(u)\},\quad
 e=1-\sum_u c(u)=\TV(p,q).
\]
The overlap--residual coupling has joint transition kernel
\begin{equation}
 \Gamma_t(u,v\mid x,y)=c(u)\one\{u=v\}
 +\begin{cases}
   \dfrac{(p(u)-c(u))(q(v)-c(v))}{e},&e>0,\\[3pt]
   0,&e=0.
  \end{cases}
 \label{eq:coupling-kernel}
\end{equation}
Its marginals are \(p,q\), and the residual supports are disjoint; hence
\(\Gamma_t\{u\ne v\mid x,y\}=e\)
\citep[Proposition~4.7]{levin2017markov}. Recursive application defines the
trajectory law \(\Gamma(P,Q^a)\). All probabilities below use this law;
we suppress \(a\) for single-action statements.

Let \(I_t=\one\{X_t\ne Y_t\}\),
\(D_H=H^{-1}\sum_{t=1}^H I_t\), and
\(\mathcal F_{t-1}=\sigma(X_{<t},Y_{<t})\). With
\(\delta_t(x,y)=\TV(P_t(\cdot\mid x),Q_t^a(\cdot\mid y))\),
Equation~\eqref{eq:coupling-kernel} and iterated expectation give
\begin{equation}
 \begin{aligned}
  \E[I_t\mid\mathcal F_{t-1}]&=\delta_t(X_{<t},Y_{<t}),\\
  \Rsmc:=\E[D_H]&=\frac1H\sum_{t=1}^H
                    \E[\delta_t(X_{<t},Y_{<t})].
 \end{aligned}
 \label{eq:cross-history-tv}
\end{equation}
This risk integrates local TV along the actual paired histories. Before
divergence, \(\delta_t(h,h)\) compares distributions on a common prefix;
after divergence, both generated histories enter the comparison.

Define
\[
 \tau=\inf\{t\le H:I_t=1\},\qquad
 p_s=\Pcal(\tau=s),\qquad
 \kappa_s=\E[D_H\mid\tau=s],
\]
where \(\inf\varnothing=\infty\) and \(\kappa_s=0\) if \(p_s=0\).
Since \(D_H=0\) on \(\{\tau=\infty\}\) and
\(HD_H=1+\sum_{t=s+1}^H I_t\) on \(\{\tau=s\}\),
\begin{equation}
 \Rsmc=\sum_{s=1}^H p_s\kappa_s,\qquad
 \frac1H\le\kappa_s\le\frac{H-s+1}{H}\quad(p_s>0).
 \label{eq:first-divergence-identity}
\end{equation}
The timing law is determined by survival and the conditional mismatch
hazard. Let \(S_s=\Pcal(\tau\ge s)\) and
\(\lambda_s=\E[\delta_s(X_{<s},Y_{<s})\mid\tau\ge s]\), with
\(\lambda_s=0\) when \(S_s=0\). Since
\(\{\tau\ge s\}\in\mathcal F_{s-1}\),
\begin{equation}
 p_s=\E[I_s\one\{\tau\ge s\}]=S_s\lambda_s,\qquad
 S_{s+1}=S_s(1-\lambda_s),\quad S_1=1.
 \label{eq:survival-hazard}
\end{equation}
First mismatches and subsequent occupation contribute
\begin{equation}
 \begin{aligned}
 O_H&=\frac1H\sum_{s=1}^H p_s,\\
 \Pi_H&=\frac1H\sum_{t=1}^H\E[I_t\one\{\tau<t\}]
       =\frac1H\sum_{t=1}^H
          \E[\delta_t(X_{<t},Y_{<t})\one\{\tau<t\}],\\
 \Rsmc&=O_H+\Pi_H.
 \end{aligned}
 \label{eq:occurrence-persistence}
\end{equation}
The equality for \(\Pi_H\) follows from the same measurable partition
and Equation~\eqref{eq:cross-history-tv}.
To separate the available time from the rate of mismatch, let
\begin{equation}
 \begin{aligned}
 E_H&=\frac1H\E\!\left[\sum_{t=1}^H\one\{\tau<t\}\right]
     =\frac1H\sum_{s=1}^H p_s(H-s),\\
 C_H&=\begin{cases}\Pi_H/E_H,&E_H>0,\\0,&E_H=0,\end{cases}
 \qquad \Rsmc=O_H+E_HC_H.
 \end{aligned}
 \label{eq:exposure-accounting}
\end{equation}
Since \(0\le I_t\one\{\tau<t\}\le\one\{\tau<t\}\),
\(0\le\Pi_H\le E_H\). Thus \(E_H\) is expected post-divergence
exposure and \(C_H\in[0,1]\) is the mismatch rate weighted by each
trajectory's number of post-divergence positions.

For two actions \(a,b\), define \(\Delta A=A^{(b)}-A^{(a)}\) and
\(\overline A=(A^{(a)}+A^{(b)})/2\). The algebraic identity
\(E^{(b)}C^{(b)}-E^{(a)}C^{(a)}
=\Delta E\,\overline C+\Delta C\,\overline E\) yields
\begin{equation}
 \Delta\Rsmc=\Delta O_H+
       \underbrace{\Delta E_H\,\overline C_H}_{\text{exposure term}}+
       \underbrace{\Delta C_H\,\overline E_H}_{\text{rate term}}.
 \label{eq:exposure-contrast}
\end{equation}
This identity separates the observed action contrast into occurrence,
exposure, and mismatch-rate terms.

Post-divergence occupation measures mismatch after the first unequal token.
It accommodates later agreements and renewed mismatches: on \(\{\tau\le H\}\),
\[
 \tau<t\le H+1\quad\Longrightarrow\quad
 X_{<t}\ne Y_{<t},
\]
because the unequal coordinate at \(\tau\) remains in each prefix.
On the same event, let \(\sigma=\inf\{t:\tau<t\le H,\ I_t=0\}\), with
\(\sigma=H+1\) when the set is empty. Then
\[
 HD_H=\sigma-\tau+\sum_{t=\sigma+1}^H I_t.
\]
Mismatch occupation can resume after the first later agreement. For
example, deterministic binary sequences \(X=(0,0,0)\) and \(Y=(1,0,1)\)
realize \((I_1,I_2,I_3)=(1,0,1)\).

For a history-independent persistent intervention, set \(P_t=\delta_0\),
\(Q_t=(1-e)\delta_0+e\delta_1\), and \(0<e<1\). The recursively
specified coupling gives \(I_t\stackrel{\mathrm{iid}}{\sim}
\operatorname{Bernoulli}(e)\), so
\[
 \begin{aligned}
 p_s&=(1-e)^{s-1}e,&\qquad \Rsmc&=e,\\
 O_H&=\frac{1-(1-e)^H}{H},&
 \Pi_H&=e-\frac{1-(1-e)^H}{H}.
 \end{aligned}
\]
For \(H\ge2\), independence also gives
\(\Pi_H=eE_H\) and \(C_H=e\). Thus substantial occupation also occurs
under history-independent persistent perturbations.

\subsection{Risks compatible with a finite branch window}
\label{sec:finite-branch-window}

A divergence-aligned window observes the same number of positions after
each entry time, subject to the horizon. Fix \(1\le L\le H\), put
\(\ell_s=\min\{L,H-s+1\}\), and define
\[
 \rho_{s,j}=\Pcal(I_{s+j}=1\mid\tau=s),\qquad
 0\le j<\ell_s,
\]
with \(\rho_{s,j}=0\) when \(p_s=0\). The first mismatch is included,
so \(\rho_{s,0}=1\) for \(p_s>0\). Define the observed contribution
and the remaining capacity by
\begin{equation}
 \begin{aligned}
 R_L(p,\rho)&=\frac1H\sum_{s=1}^H p_s
                         \sum_{j=0}^{\ell_s-1}\rho_{s,j},\\
 B_L(p)&=\frac1H\sum_{s=1}^H p_s(H-s-L+1)_+.
 \end{aligned}
 \label{eq:profile-components}
\end{equation}
In particular, \(R_1=O_H\), \(R_H=\Rsmc\), and \(B_H=0\).
Partial identification asks which values of \(\Rsmc\) are compatible
with these observations~\citep{manski2003partial}.

\begin{definition}[Admissible depth-\(L\) profile]
\label{def:depth-profile}
The profile \(\mathcal S_L(P,Q)=(p,\rho)\) is admissible if
\[
 p_s\ge0,\qquad \sum_{s=1}^H p_s\le1,\qquad
 0\le\rho_{s,j}\le1,
\]
with \(\rho_{s,0}=1\) when \(p_s>0\) and
\(\rho_{s,j}=0\) when \(p_s=0\).
\end{definition}

\begin{theorem}[Sharp identified risk interval]
\label{thm:sharp-frontier}
For every admissible profile on an alphabet with at least two symbols,
allowing \(P,Q\) to vary over all autoregressive kernels subject to that
profile, the compatible risks are exactly
\begin{equation}
 \{\Rsmc(P,Q):\mathcal S_L(P,Q)=(p,\rho)\}
 =[R_L(p,\rho),R_L(p,\rho)+B_L(p)].
 \label{eq:sharp-frontier}
\end{equation}
\end{theorem}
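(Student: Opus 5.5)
The proof has two inclusions. The containment "\(\subseteq\)" follows directly from the first-divergence identity. The containment "\(\supseteq\)" needs an explicit construction of kernel pairs.

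\emph{Containment.} Fix \(P,Q\) with \(\mathcal S_L(P,Q)=(p,\rho)\). Conditioning on \(\{\tau=s\}\) in Equation~\eqref{eq:first-divergence-identity} gives
\[
 \Rsmc=\frac1H\sum_{s=1}^H p_s\Bigl(\sum_{j=0}^{\ell_s-1}\rho_{s,j}
 +\sum_{j=\ell_s}^{H-s}\Pcal(I_{s+j}=1\mid\tau=s)\Bigr).
\]
The first inner sum is the contribution to \(R_L\). The second sum has \((H-s-L+1)_+\) terms, each in \([0,1]\). Hence \(R_L\le\Rsmc\le R_L+B_L\).

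\emph{Attainment.} Given \(r\in[R_L,R_L+B_L]\), write \(r=R_L+\theta B_L\) with \(\theta\in[0,1]\) (any \(\theta\) works if \(B_L=0\)). It suffices to build \(P,Q\) on \(\{0,1\}\subseteq\mathcal V\) reproducing \(p\), every \(\rho_{s,j}\), and a tail mismatch probability exactly \(\theta\) at every tail position. The construction, using only symbols \(0,1\), is as follows.

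\begin{enumerate}[label=(\roman*)]
\item \emph{Divergence law.} Let \(P_t(\cdot\mid x)=\delta_0\) on every history. On the all-zero history of length \(s-1\), let \(Q_s=(1-\lambda_s)\delta_0+\lambda_s\delta_1\). The hazards are \(\lambda_s=p_s/S_s\) with \(S_s=1-\sum_{r<s}p_r\), and \(\lambda_s=0\) when \(S_s=0\). By Equation~\eqref{eq:survival-hazard}, this yields \(\Pcal(\tau=s)=p_s\), because before divergence both histories are the common all-zero prefix. Admissibility (\(\sum p_s\le1\)) guarantees \(\lambda_s\in[0,1]\).
\item \emph{Post-divergence mismatches.} After divergence at \(s\), the \(X\)-history is all zeros. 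The \(Y\)-history begins with zeros followed by a \(1\) at position \(s\), so \(s\) is readable from \(Y_{<t}\). For \(t=s+j\), set \(Q_t(\cdot\mid y)=(1-\pi_t)\delta_0+\pi_t\delta_1\). Take \(\pi_t=\rho_{s,j}\) for \(1\le j<\ell_s\), and \(\pi_t=\theta\) for \(j\ge\ell_s\), independently of the later \(Y\) symbols.
\item \emph{Matching conditional mismatches.} Since \(P_t=\delta_0\), the overlap--residual coupling gives \(\Pcal(I_t=1\mid\mathcal F_{t-1})=\pi_t\) on \(\{\tau=s\}\). Therefore \(\Pcal(I_{s+j}=1\mid\tau=s)=\rho_{s,j}\) in the window, with \(\rho_{s,0}=1\) automatically, and equals \(\theta\) in the tail.
\item \emph{Null-probability entries.} When \(p_s=0\), the profile sets \(\rho_{s,j}=0\) by convention, consistent with the definition. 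The kernels on unreachable histories are arbitrary.
\end{enumerate}

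Summing the conditional mismatch probabilities then gives \(\Rsmc=R_L+\theta B_L=r\).

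The main obstacle is step (iii). One must verify that the stepwise maximal coupling does not distort the prescribed marginals of \(I_t\), even though \(\rho_{s,j}\) is defined under the coupling law rather than under the marginals. Choosing \(P\) deterministic resolves this. The residual of \(p=\delta_0\) is then zero on symbol \(1\), so the mismatch indicator equals \(\one\{Y_t=1\}\) exactly. Each \(\rho_{s,j}\) is therefore a free Bernoulli parameter chosen independently per \((s,j)\), which requires only the two symbols \(0,1\). This is where the hypothesis \(|\mathcal V|\ge2\) is used.
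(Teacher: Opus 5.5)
Your proof is correct and follows the paper's argument essentially step for step: the interval bounds come from counting the $(H-s-L+1)_+$ unobserved tail positions for each entry time, and attainment uses the same construction. That construction takes a deterministic reference $P_t=\delta_0$ and a binary intervention with hazard $p_s/(1-\sum_{r<s}p_r)$ on the all-zero history, whose post-divergence probability of emitting $1$ is read off the first-one position ($\rho_{s,j}$ in the window, a constant $\theta$ in the tail), so that $I_t=\one\{Y_t=1\}$ under the coupling and $\Rsmc=R_L+\theta B_L$. The only step you leave implicit is the short induction giving $\Pcal(\tau\ge s)=1-\sum_{r<s}p_r$ (including the degenerate case where this survival is zero), which the paper writes out.
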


\begin{proof}
For every compatible pair, \(0\le I_t\le1\) implies
\[
 \begin{aligned}
 0\le\Rsmc-R_L
 &=\frac1H\sum_{s:p_s>0}p_s
     \sum_{j=\ell_s}^{H-s}\E[I_{s+j}\mid\tau=s]\\
 &\le\frac1H\sum_s p_s(H-s+1-\ell_s)=B_L.
 \end{aligned}
\]
For attainability, fix \(\alpha\in[0,1]\), use symbols \(0,1\), and set
\(P_t(0\mid x_{<t})=1\). Define
\[
 \bar S_t=1-\sum_{j<t}p_j,\qquad
 \eta_t=\begin{cases}p_t/\bar S_t,&\bar S_t>0,\\0,&\bar S_t=0.\end{cases}
\]
Admissibility gives \(0\le p_t\le\bar S_t\), so \(\eta_t\in[0,1]\).
For a binary history let \(s(y)=\min\{j:y_j=1\}\), with
\(\min\varnothing=\infty\), and define
\[
 Q_t^\alpha(1\mid y_{<t})=
 \begin{cases}
  \eta_t,&s(y)=\infty,\\
  \rho_{s,t-s},&s=s(y)<t,\ t-s<\ell_s,\\
  \alpha,&s=s(y)<t,\ t-s\ge\ell_s.
 \end{cases}
\]
Set \(Q_t^\alpha(0\mid y_{<t})=1-Q_t^\alpha(1\mid y_{<t})\), give
other symbols zero mass, and choose kernels arbitrarily on nonbinary
histories. Under Equation~\eqref{eq:coupling-kernel}, \(X_t=0\) and
\(I_t=Y_t\) almost surely. Induction using
\(\bar S_{t+1}=\bar S_t(1-\eta_t)\) gives
\[
 \Pcal(\tau\ge s)=\prod_{j<s}(1-\eta_j)=\bar S_s,
 \qquad \Pcal(\tau=s)=\bar S_s\eta_s=p_s.
\]
For every \(p_s>0\),
\[
 \Pcal(I_{s+j}=1\mid\tau=s)=
 \begin{cases}
  \rho_{s,j},&0\le j<\ell_s,\\
  \alpha,&\ell_s\le j\le H-s.
 \end{cases}
\]
Consequently, \(\mathcal S_L(P,Q^\alpha)=(p,\rho)\) and
\(\Rsmc(P,Q^\alpha)=R_L+\alpha B_L\). Varying \(\alpha\) fills the
claimed interval, including the singleton case \(B_L=0\).
\end{proof}

\begin{figure}[t]
\centering
\begin{tikzpicture}[x=1cm,y=1cm,font=\figurefont,line cap=round]
\path[use as bounding box] (0,0) rectangle (6.7,2.05);
\node[inner sep=0pt,] at (3.35000,1.92000) {$H=6,\ L=3,\ p_1=1$};
\node[inner sep=0pt,] at (5.95000,1.92000) {$R_{\rm SMC}$};
\node[inner sep=0pt,anchor=east] at (0.95000,1.35000) {Lower};
\path[fill=figblue!10] (1.08000,1.15000) rectangle (2.85000,1.55000);
\path[fill=figrate!15] (2.98000,1.15000) rectangle (4.78000,1.55000);
\filldraw[fill=figfloor,draw=figfloor,line width=.5pt] (1.35000,1.35000) circle[radius=.115cm];
\filldraw[fill=white,draw=black!35,line width=.5pt] (1.97000,1.35000) circle[radius=.115cm];
\filldraw[fill=figexposure,draw=figexposure,line width=.5pt] (2.59000,1.35000) circle[radius=.115cm];
\filldraw[fill=white,draw=black!35,line width=.5pt] (3.21000,1.35000) circle[radius=.115cm];
\filldraw[fill=white,draw=black!35,line width=.5pt] (3.83000,1.35000) circle[radius=.115cm];
\filldraw[fill=white,draw=black!35,line width=.5pt] (4.45000,1.35000) circle[radius=.115cm];
\node[inner sep=0pt,] at (5.95000,1.35000) {$2/6$};
\node[inner sep=0pt,anchor=east] at (0.95000,0.70000) {Upper};
\path[fill=figblue!10] (1.08000,0.50000) rectangle (2.85000,0.90000);
\path[fill=figrate!15] (2.98000,0.50000) rectangle (4.78000,0.90000);
\filldraw[fill=figfloor,draw=figfloor,line width=.5pt] (1.35000,0.70000) circle[radius=.115cm];
\filldraw[fill=white,draw=black!35,line width=.5pt] (1.97000,0.70000) circle[radius=.115cm];
\filldraw[fill=figexposure,draw=figexposure,line width=.5pt] (2.59000,0.70000) circle[radius=.115cm];
\filldraw[fill=figexposure,draw=figexposure,line width=.5pt] (3.21000,0.70000) circle[radius=.115cm];
\filldraw[fill=figexposure,draw=figexposure,line width=.5pt] (3.83000,0.70000) circle[radius=.115cm];
\filldraw[fill=figexposure,draw=figexposure,line width=.5pt] (4.45000,0.70000) circle[radius=.115cm];
\node[inner sep=0pt,] at (5.95000,0.70000) {$5/6$};
\node[inner sep=0pt,] at (3.35000,0.12000) {Observed window \quad\; Free tail: $B_L=3/6$};
\end{tikzpicture}
\caption{A shared observation window admits different cumulative risks.
Filled circles indicate mismatches. Both paths have the same depth-three
profile; agreement or mismatch throughout the remaining three positions
attains the lower or upper endpoint of Theorem~\ref{thm:sharp-frontier}.}
\label{fig:depth-frontier}
\end{figure}

The capacity \(B_L\) quantifies unobserved occupation; the actual tail
\(R_H-R_L\) locates an intervention within the interval
(Figure~\ref{fig:depth-frontier}).

\subsection{Consequences of the identified interval}
\label{sec:interval-consequences}

\begin{corollary}[Exact-profile minimax estimation]
\label{cor:profile-minimax}
For a fixed admissible profile, write \(r=R_L\), \(b=B_L\). Then
\[
 \inf_T\sup_{u\in[r,r+b]}|T-u|=\frac b2,\qquad
 \inf_T\sup_{u\in[r,r+b]}(T-u)^2=\frac{b^2}{4},
\]
with both infima attained by \(T=r+b/2\).
\end{corollary}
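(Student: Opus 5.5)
The corollary is a standard minimax computation over an interval. By Theorem~\ref{thm:sharp-frontier}, the set of risks compatible with the fixed profile is exactly \([r,r+b]\). So the worst case over compatible kernel pairs reduces to a worst case over \(u\in[r,r+b]\), and an estimator that sees only the profile is simply a number \(T\). The plan has two parts. First, I compute the worst-case loss of an arbitrary constant \(T\). Second, I show that the midpoint minimizes that worst case, for both losses.

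For absolute loss, \(u\mapsto|T-u|\) is convex, so its maximum over the interval is attained at an endpoint:
\[
 \sup_{u\in[r,r+b]}|T-u|=\max\{|T-r|,\,|T-r-b|\}.
\]
For the lower bound, the triangle inequality gives
\[
 b=|(r+b)-r|\le|T-r|+|T-r-b|\le 2\max\{|T-r|,|T-r-b|\},
\]
so every \(T\) has worst-case loss at least \(b/2\). For attainment, \(T=r+b/2\) makes both endpoint distances equal to \(b/2\), so the bound is achieved.

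Squared loss follows immediately. Because \(x\mapsto x^2\) is increasing on \([0,\infty)\),
\[
 \sup_u (T-u)^2=\Bigl(\sup_u|T-u|\Bigr)^2 .
\]
Hence the infimum over \(T\) equals \((b/2)^2=b^2/4\), and it is attained at the same midpoint.

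The only point needing care is the claim that the supremum over kernel pairs equals the supremum over the whole closed interval. This is exactly the content of Theorem~\ref{thm:sharp-frontier}: both endpoints and every interior value are realized, via \(\alpha\in[0,1]\) in its construction. The degenerate case \(b=0\) is consistent with the formula: the interval is a single point, \(T=r\) gives zero loss, and the stated values are \(0\).
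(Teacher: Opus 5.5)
Your proof is correct and follows essentially the same route as the paper. You reduce the worst case to the two endpoints, bound it below by the average of the endpoint losses (your triangle-inequality step is the paper's \(\max\{|T-r|,|T-r-b|\}\ge\frac12(|T-r|+|T-r-b|)\ge b/2\)), and attain the bound at the midpoint; the only cosmetic difference is for squared loss, where you use monotonicity of \(x\mapsto x^2\) on \([0,\infty)\) while the paper completes the square, \(\tfrac12[(T-r)^2+(T-r-b)^2]=(T-r-b/2)^2+b^2/4\).
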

\begin{proof}
Theorem~\ref{thm:sharp-frontier} attains both endpoints at the same profile.
For every \(T\in\mathbb R\),
\[
 \begin{aligned}
 \max\{|T-r|,|T-r-b|\}&\ge\frac{|T-r|+|T-r-b|}{2}\ge\frac b2,\\
 \max\{(T-r)^2,(T-r-b)^2\}
 &\ge\frac{(T-r)^2+(T-r-b)^2}{2}\\
 &=\left(T-r-\frac b2\right)^2+\frac{b^2}{4}.
 \end{aligned}
\]
At \(T=r+b/2\), \(|T-u|\le b/2\) for every \(u\in[r,r+b]\).
\end{proof}

\begin{corollary}[Shared-reference action ordering]
\label{cor:action-ordering}
For two admissible profiles at the same \(H,L\), let
\([\underline r_c,\overline r_c]\) be their identified intervals,
\(c\in\{a,b\}\). Allowing a shared reference \(P\) and both action
kernels to vary subject to those profiles, the sharp contrast set is
\[
 \{\Rsmc(P,Q^b)-\Rsmc(P,Q^a)\}
 =[\underline r_b-\overline r_a,\overline r_b-\underline r_a].
\]
Every compatible triple satisfies \(\Rsmc(P,Q^a)<\Rsmc(P,Q^b)\)
if and only if \(\overline r_a<\underline r_b\).
\end{corollary}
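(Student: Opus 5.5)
The plan is to prove the two inclusions of the contrast-set identity separately, and then deduce the ordering statement from the resulting interval.

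\emph{Outer bound.} Let \((P,Q^a,Q^b)\) be any compatible triple. Then \((P,Q^a)\) is a kernel pair with profile \(\mathcal S_L(P,Q^a)\) equal to the \(a\)-profile, and likewise for \(b\). Theorem~\ref{thm:sharp-frontier} places each risk in its identified interval, \(\Rsmc(P,Q^c)\in[\underline r_c,\overline r_c]\). Subtracting the two inclusions confines the difference to \([\underline r_b-\overline r_a,\overline r_b-\underline r_a]\).

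\emph{Attainability.} The key observation is that the construction in the proof of Theorem~\ref{thm:sharp-frontier} always uses the same reference \(P_t(0\mid x_{<t})=1\), whatever the profile. Only the intervention kernel \(Q^\alpha\) depends on \((p,\rho)\) and on \(\alpha\). So for \(\alpha,\beta\in[0,1]\) I build \(Q^{a,\alpha}\) from the \(a\)-profile and \(Q^{b,\beta}\) from the \(b\)-profile, both against this one \(P\). Each pair \((P,Q^{c,\cdot})\) is coupled separately by Equation~\eqref{eq:coupling-kernel}, so no joint coupling between the two actions is needed; the theorem's proof already gives the profiles and the risks \(\underline r_a+\alpha B_L^{(a)}\) and \(\underline r_b+\beta B_L^{(b)}\). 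The difference is therefore
\[
 (\underline r_b-\underline r_a)+\beta B_L^{(b)}-\alpha B_L^{(a)}.
\]
This is continuous on the connected square \([0,1]^2\). Its minimum is \(\underline r_b-\overline r_a\), at \(\beta=0,\alpha=1\), and its maximum is \(\overline r_b-\underline r_a\), at \(\beta=1,\alpha=0\), so by connectedness it takes every value in between. This covers the degenerate cases \(B_L=0\) automatically.

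\emph{Ordering claim.} If \(\overline r_a<\underline r_b\), then every element of the contrast set is positive, so \(\Rsmc(P,Q^a)<\Rsmc(P,Q^b)\) for every compatible triple. Conversely, if \(\overline r_a\ge\underline r_b\), the lower endpoint \(\underline r_b-\overline r_a\le0\) is attained by the construction above, which gives a compatible triple violating the strict inequality.

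The only step needing care is the attainability argument. Its real content is checking that Theorem~\ref{thm:sharp-frontier}'s construction does not tailor \(P\) to the profile. Beyond that, the main obstacle is making sure the two separately built \(Q\) kernels live on the same alphabet and horizon, which holds because both profiles share \(H,L\).
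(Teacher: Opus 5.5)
Your proposal is correct and follows the paper's proof: both rest on the observation that the theorem's construction uses the profile-independent reference \(P^\star_t(\cdot\mid h)=\delta_0\), so the two action kernels can be built separately against it. The only difference is presentational. The paper shows \(I_a\times I_b\subseteq\mathcal J\), so the joint risk set is the full product, and then takes the Minkowski difference; you parametrize the difference directly and use connectedness of \([0,1]^2\).
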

\begin{proof}
Write \(\mathcal S_L^{(c)}\) for the specified profile of action \(c\),
\(I_c=[\underline r_c,\overline r_c]\), and
\(b_c=\overline r_c-\underline r_c=B_L^{(c)}\). By
Theorem~\ref{thm:sharp-frontier}, the joint risk set satisfies
\[
 \mathcal J:=\left\{\bigl(\Rsmc(P,Q^a),\Rsmc(P,Q^b)\bigr):
 \mathcal S_L(P,Q^c)=\mathcal S_L^{(c)},\ c=a,b\right\}
 \subseteq I_a\times I_b.
\]
For \(P_t^\star(\cdot\mid h)=\delta_0\), the same theorem's construction gives,
for every \((\alpha_a,\alpha_b)\in[0,1]^2\),
\[
 \begin{aligned}
 \mathcal S_L(P^\star,Q^{c,\alpha_c})&=\mathcal S_L^{(c)},\qquad c=a,b,\\
 \bigl(\Rsmc(P^\star,Q^{a,\alpha_a}),\Rsmc(P^\star,Q^{b,\alpha_b})\bigr)
 &=(\underline r_a+\alpha_a b_a,\underline r_b+\alpha_b b_b).
 \end{aligned}
\]
Hence \(I_a\times I_b\subseteq\mathcal J\), and
\[
 \begin{aligned}
 \{v-u:(u,v)\in\mathcal J\}
 &=\underline r_b-\underline r_a+[0,b_b]-[0,b_a]\\
 &=[\underline r_b-\overline r_a,\overline r_b-\underline r_a],\\
 \bigl[\forall (u,v)\in\mathcal J,\ u<v\bigr]
 &\iff\min_{(u,v)\in\mathcal J}(v-u)>0
 \iff\overline r_a<\underline r_b.
 \end{aligned}
\]
\end{proof}

\subsection{Additional information from a fixed reference}
\label{sec:fixed-reference}

Specifying the complete reference kernel \(P_0\) restricts the compatible
class:
\[
 \mathcal I_L(P_0;p,\rho)
 :=\{\Rsmc(P_0,Q):\mathcal S_L(P_0,Q)=(p,\rho)\}
 \subseteq[R_L,R_L+B_L].
\]
The inclusion can be strict. Let \(\mathcal V=\{0,1\}\), \(H=2\),
\(L=1\), \(P_{0,t}(1\mid h)=1/2\) at every history, and
\(p_1=\varepsilon\in(0,1/2)\), \(p_2=0\). Necessarily
\[
 Q_1(1)=\tfrac12\pm\varepsilon,\qquad
 c_1(z)=\min\{\tfrac12,Q_1(z)\}>0\quad(z=0,1).
\]
The probability of first divergence at time two satisfies
\[
 0=p_2=\sum_{z=0}^1 c_1(z)
      \TV\!\left(\operatorname{Bernoulli}(\tfrac12),Q_2(\cdot\mid z)\right).
\]
Each summand is nonnegative with positive coefficient, whence
\(Q_2(\cdot\mid z)=\operatorname{Bernoulli}(1/2)\) for both histories.
Thus \(\delta_2(x,y)=0\) for every \(x,y\in\{0,1\}\), and
\[
 \mathcal I_1(P_0;p,\rho)=\{\varepsilon/2\}
 \subsetneq[\varepsilon/2,\varepsilon]=[R_1,R_1+B_1].
\]
Either choice of \(Q_1(1)=1/2\pm\varepsilon\), followed by fair
second-step kernels, attains the singleton. For any specified reference,
separated general intervals remain a sufficient action-order condition.

\section{Measuring Occurrence and Subsequent Occupation}
\label{sec:swrb}

Residual-branch conditional Monte Carlo measures the decomposition by
separating branch probability from within-branch loss. Survival-Weighted Residual Branching (SWRB) integrates out the zero-loss
full-agreement event. The same sampled suffix then provides the observed
window and the realized tail, retaining their covariance. We suppress
the action index in \(P,Q^a\).

\subsection{Conditioning on a residual branch}
\label{sec:residual-conditioning}

At a common prefix \(h\), let
\(c_s(z\mid h)=\min\{P_s(z\mid h),Q_s(z\mid h)\}\) and
\(e_s(h)=1-\sum_z c_s(z\mid h)\). Draw an auxiliary path \(U\) using
\begin{equation}
 K_s^\cap(z\mid h)=\frac{c_s(z\mid h)}{1-e_s(h)}\quad(e_s(h)<1),
 \label{eq:overlap-kernel}
\end{equation}
extended arbitrarily when \(e_s=1\). Its survival weights and branch mass are
\begin{equation}
 \widetilde S_s=\prod_{j<s}(1-e_j(U_{<j})),\qquad
 w_s=\widetilde S_s e_s(U_{<s}),\qquad
 M=\sum_s w_s=1-\widetilde S_{H+1}.
 \label{eq:branch-weights}
\end{equation}
The overlap identity \((1-e_s)K_s^\cap=c_s\) gives
\begin{equation}
 \Pcal_U(U_{<s}=u_{<s})w_s(u)
 =\Pcal(X_{<s}=Y_{<s}=u_{<s},\tau=s).
 \label{eq:prefix-mixture}
\end{equation}
Thus \(\E[w_s]=p_s\). An auxiliary mixture conditional on \(U\) assigns zero-loss mass
\(1-M\) and residual-branch masses \(w_s\); marginalizing \(U\)
recovers the ordinary coupled-loss law. A branch at \(s\) draws the residual token pair at \(U_{<s}\)
and continues the declared coupling with fresh randomness. Write its loss
as \(D_s=H^{-1}\sum_{t=s}^H I_t\), and its conditional second moment
as \(\nu_s=\E[D_s^2\mid U]\), zero for a zero-weight branch.

We remove the zero-loss component by sampling \(J\mid U\) with
probabilities \(w_s/M\) for \(M>0\) and generating the selected suffix.
If an auxiliary
\(B\mid U\sim\operatorname{Bernoulli}(M)\) is independent of
\((J,D_J)\) given \(U\), then \(BD_J\) has the ordinary coupled-loss
law. Conditional expectation gives
\begin{equation}
 \E[BD_J\mid U,J,D_J]=MD_J.
 \label{eq:conditional-estimator}
\end{equation}
This is the Rao--Blackwell representation
\citep{blackwell1947conditional,casella1996raoblackwellisation,robert2004montecarlo}.
At \(M=0\), take \(J=1,D_J=B=0\).

\begin{proposition}[Unbiased measurement and per-replicate variance dominance]
\label{thm:swrb}
The estimates
\begin{equation}
 Z^R=MD_J,\qquad Z^O=M/H,\qquad Z^\Pi=M(D_J-1/H)
 \label{eq:swrb-components}
\end{equation}
satisfy \(Z^R=Z^O+Z^\Pi\) and have expectations
\(\Rsmc,O_H,\Pi_H\), respectively. For ordinary coupled loss
\(Z_{\mathrm{naive}}\),
\begin{equation}
 \operatorname{Var}(Z_{\mathrm{naive}})-\operatorname{Var}(Z^R)
 =\E\!\left[(1-M)\sum_s w_s\nu_s\right]\ge0,
 \label{eq:swrb-gap}
\end{equation}
with strict inequality exactly when \(\Pcal(0<M<1)>0\).
\end{proposition}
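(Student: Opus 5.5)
The plan is to work entirely conditionally on the auxiliary path \(U\) and then apply the tower property. The additive identity \(Z^R=Z^O+Z^\Pi\) is immediate, since \(MD_J=M/H+M(D_J-1/H)\).

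For the expectations, conditional on \(U\) with \(M>0\) we have
\[
\E[MD_J\mid U]=M\sum_s \frac{w_s}{M}\,\E[D_s\mid U]=\sum_s w_s\,\E[D_s\mid U].
\]
By the prefix-mixture identity~\eqref{eq:prefix-mixture}, \(\Pcal_U(U_{<s}=u)\,w_s(u)\) equals the coupled-law mass of the event \(\{X_{<s}=Y_{<s}=u,\ \tau=s\}\). On that event the branch draws the residual pair at \(u\) and continues the coupling, so \(\E[D_s\mid U_{<s}=u]\) equals \(\E[H^{-1}\sum_{t\ge s}I_t\mid X_{<s}=Y_{<s}=u,\tau=s]\). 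Here I need the check that the residual pair drawn under~\eqref{eq:coupling-kernel}, conditional on mismatch, is exactly the normalized residual product measure; this holds because the residual supports are disjoint.

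Summing over \(s\) and \(u\) gives \(\E[Z^R]=\sum_s p_s\kappa_s=\Rsmc\) by~\eqref{eq:first-divergence-identity}. Similarly \(\E[M]=\sum_s\E[w_s]=\sum_s p_s\), so \(\E[Z^O]=O_H\) by~\eqref{eq:occurrence-persistence}. Then \(\E[Z^\Pi]=\Rsmc-O_H=\Pi_H\). The case \(M=0\) contributes zero to every term, consistent with the convention \(J=1\), \(D_J=0\).

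For the variance gap, I use the representation \(Z_{\mathrm{naive}}\stackrel{d}{=}BD_J\) together with~\eqref{eq:conditional-estimator}, which gives \(Z^R=\E[BD_J\mid U,J,D_J]\). The law of total variance then yields
\[
\operatorname{Var}(BD_J)-\operatorname{Var}(Z^R)=\E[\operatorname{Var}(BD_J\mid U,J,D_J)].
\]
Since \(B\) is Bernoulli\((M)\) and independent of \((J,D_J)\) given \(U\),
\[
\operatorname{Var}(BD_J\mid U,J,D_J)=M(1-M)D_J^2.
\]
Taking the expectation over \(J,D_J\) given \(U\) gives \(M(1-M)\sum_s (w_s/M)\nu_s=(1-M)\sum_s w_s\nu_s\). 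Combining these steps gives~\eqref{eq:swrb-gap}, and the gap is nonnegative because every factor is.

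The main obstacle is the strictness characterization. If \(0<M<1\) with positive probability, then some \(w_s>0\). Any branch entered at \(s\) starts with \(I_s=1\), so \(D_s\ge 1/H\) and \(\nu_s\ge 1/H^2>0\). The integrand is therefore strictly positive on an event of positive probability. Conversely, if \(M\in\{0,1\}\) almost surely, the factor \((1-M)\sum_s w_s\) vanishes almost surely: when \(M=0\) all \(w_s=0\), and when \(M=1\) the factor \(1-M\) is zero. I will take care to state \(\nu_s\ge H^{-2}\) whenever \(w_s>0\) explicitly, since that bound is what makes the equivalence exact.
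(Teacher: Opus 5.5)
Your proposal is correct and follows essentially the same route as the paper. The means come from conditioning on \(U\) and applying the prefix-mixture identity, and your law-of-total-variance step given \((U,J,D_J)\) amounts to the paper's subtraction of the conditional second moments \(\sum_s w_s\nu_s\) and \(M\sum_s w_s\nu_s\). Strictness rests on the same bound \(D_s\ge 1/H\) on positive-weight branches, which the paper states as \(M(1-M)/H^2\le(1-M)\sum_s w_s\nu_s\le M(1-M)\).
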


\begin{proof}
Let \(m_s=\E[D_s\mid U]\), zero on zero-weight branches. By
Equation~\eqref{eq:prefix-mixture}, \(\E[w_sm_s]=p_s\kappa_s\).
The branch law and conditional independence of \(B\) give
\[
 \begin{aligned}
 \E[BD_J\mid U]&=\E[MD_J\mid U]=\sum_s w_sm_s,\\
 \E[(BD_J)^2\mid U]&=\sum_s w_s\nu_s,\qquad
 \E[(MD_J)^2\mid U]=M\sum_s w_s\nu_s.
 \end{aligned}
\]
Thus \(\E[Z^R]=\sum_s p_s\kappa_s=\Rsmc\),
\(\E[Z^O]=H^{-1}\sum_s p_s=O_H\), and
\(\E[Z^\Pi]=\E[Z^R-Z^O]=\Pi_H\).
Since \(BD_J\overset{d}=Z_{\mathrm{naive}}\), subtracting second moments
proves Equation~\eqref{eq:swrb-gap}. On \(\{w_s>0\}\),
\(H^{-1}\le D_s\le1\); hence
\[
 \frac{M(1-M)}{H^2}\le(1-M)\sum_s w_s\nu_s\le M(1-M).
\]
The expected gap is positive if and only if \(\Pcal(0<M<1)>0\).
All identities include \(M=0\) under the stated zero convention.
\end{proof}

\subsection{Joint measurement of the window and tail}
\label{sec:window-tail-measurement}

The branch-mixture identity also applies to a truncated suffix, so a
single replicate can resolve how loss accumulates with observation depth.
For \(\ell_s=\min\{L,H-s+1\}\), define
\(D_{s,L}=H^{-1}\sum_{j=0}^{\ell_s-1}I_{s+j}\),
\(m_{s,L}=\E[D_{s,L}\mid U]\), \(Z_L=MD_{J,L}\), and
\(W_L=H^{-1}\sum_s w_s(H-s-L+1)_+\). Applying the same mixture to
the window contribution, with \(D_{s,L}=m_{s,L}=0\) when \(w_s=0\), gives
\begin{equation}
 \E[Z_L]=R_L,\qquad \E[Z_H-Z_L]=R_H-R_L,\qquad \E[W_L]=B_L.
 \label{eq:depth-measurement}
\end{equation}
Indeed, \(\E[Z_L\mid U]=\sum_s w_s m_{s,L}\), and
\(\E[w_s m_{s,L}]=(p_s/H)\sum_{j<\ell_s}\rho_{s,j}\).
Summation and linearity prove Equation~\eqref{eq:depth-measurement}.
The shared suffix preserves the covariance of the window and tail estimates
(Figure~\ref{fig:swrb-measurement}, Algorithm~\ref{alg:conditional-measurement}).

\begin{figure}[H]
\centering
\begin{subfigure}[b]{.32\linewidth}
\centering\begin{tikzpicture}[x=1cm,y=1cm,font=\figurefont,line cap=round]
\path[use as bounding box] (0,0) rectangle (4.35,1.65);
\node[inner sep=0pt,] at (2.17500,1.50000) {$H=5,\quad M=0.30$};
\path[fill=black!8] (0.10000,0.65000) rectangle (2.90000,1.08000);
\node[inner sep=0pt,] at (1.50000,0.86500) {Agreement: 0.70};
\path[fill=figblue!50] (2.90000,0.65000) rectangle (3.02000,1.08000);
\path[fill=figblue!65] (3.02000,0.65000) rectangle (3.26000,1.08000);
\path[fill=figred] (3.26000,0.65000) rectangle (3.62000,1.08000);
\path[fill=figblue!80] (3.62000,0.65000) rectangle (3.92000,1.08000);
\path[fill=figblue] (3.92000,0.65000) rectangle (4.10000,1.08000);
\node[inner sep=0pt,] at (2.17500,0.22000) {$M=\sum_s w_s$};
\end{tikzpicture}
\caption{Scan branch mass}
\end{subfigure}\hfill
\begin{subfigure}[b]{.32\linewidth}
\centering\begin{tikzpicture}[x=1cm,y=1cm,font=\figurefont,line cap=round]
\path[use as bounding box] (0,0) rectangle (4.35,1.65);
\node[inner sep=0pt,] at (2.17500,1.50000) {$J\mid U\sim w/M$};
\path[fill=figblue!50] (0.15000,0.65000) rectangle (0.55000,1.08000);
\node[inner sep=0pt,text=white] at (0.35000,0.86500) {1};
\path[fill=figblue!65] (0.55000,0.65000) rectangle (1.35000,1.08000);
\node[inner sep=0pt,text=white] at (0.95000,0.86500) {2};
\path[fill=figred] (1.35000,0.65000) rectangle (2.55000,1.08000);
\node[inner sep=0pt,text=white] at (1.95000,0.86500) {3};
\path[fill=figblue!80] (2.55000,0.65000) rectangle (3.55000,1.08000);
\node[inner sep=0pt,text=white] at (3.05000,0.86500) {4};
\path[fill=figblue] (3.55000,0.65000) rectangle (4.15000,1.08000);
\node[inner sep=0pt,text=white] at (3.85000,0.86500) {5};
\node[inner sep=0pt,] at (2.17500,0.22000) {Sample $J=3$: probability $0.30$};
\end{tikzpicture}
\caption{Select one branch}
\end{subfigure}\hfill
\begin{subfigure}[b]{.32\linewidth}
\centering\begin{tikzpicture}[x=1cm,y=1cm,font=\figurefont,line cap=round]
\path[use as bounding box] (0,0) rectangle (4.35,1.65);
\path[fill=figexposure!10] (1.90000,1.10000) rectangle (4.20000,1.65000);
\filldraw[fill=white,draw=black!35,line width=.5pt] (0.55000,1.43000) circle[radius=.115cm];
\node[inner sep=0pt,] at (0.55000,1.02000) {1};
\filldraw[fill=white,draw=black!35,line width=.5pt] (1.40000,1.43000) circle[radius=.115cm];
\node[inner sep=0pt,] at (1.40000,1.02000) {2};
\filldraw[fill=figfloor,draw=figfloor,line width=.5pt] (2.25000,1.43000) circle[radius=.115cm];
\node[inner sep=0pt,] at (2.25000,1.02000) {3};
\filldraw[fill=white,draw=black!35,line width=.5pt] (3.10000,1.43000) circle[radius=.115cm];
\node[inner sep=0pt,] at (3.10000,1.02000) {4};
\filldraw[fill=figexposure,draw=figexposure,line width=.5pt] (3.95000,1.43000) circle[radius=.115cm];
\node[inner sep=0pt,] at (3.95000,1.02000) {5};
\draw[figfloor,->,line width=.5pt] (2.25000,0.87000) -- (2.25000,0.77000) -- (1.10000,0.77000) -- (1.10000,0.62000);
\draw[figexposure,->,line width=.5pt] (3.95000,0.87000) -- (3.95000,0.71000) -- (2.02000,0.71000) -- (2.02000,0.62000);
\path[fill=figfloor] (0.64000,0.18000) rectangle (1.56000,0.62000);
\path[fill=figexposure] (1.56000,0.18000) rectangle (2.48000,0.62000);
\node[inner sep=0pt,text=white] at (1.10000,0.40000) {0.06};
\node[inner sep=0pt,text=white] at (2.02000,0.40000) {0.06};
\node[inner sep=0pt,] at (3.43000,0.40000) {$Z^R=0.12$};
\end{tikzpicture}
\caption{Weight its loss}
\end{subfigure}
\caption{One residual-branch measurement, conditional on an overlap scan.
The scan places mass \(M=0.30\) on divergence. Renormalizing its branch
weights selects one suffix; the pictured sample has \(D_3=2/5\) and
returns \(Z^R=0.12\). In (c), filled circles mark mismatches at positions
3 and 5; their bars contribute \(Z^O=Z^\Pi=M/H=0.06\), respectively.
The factor \(M\) preserves the unconditional mean.}
\label{fig:swrb-measurement}
\end{figure}

\begin{algorithm}[H]
\caption{One conditional Monte Carlo replicate}
\label{alg:conditional-measurement}
\begin{algorithmic}[1]
\Require Kernels \(P,Q\), horizon \(H\), depths \(\mathcal L\subseteq\{1,\ldots,H\}\)
\Ensure \((Z^R,Z^O,Z^\Pi)\) and \((Z_L,Z_H-Z_L,W_L)_{L\in\mathcal L}\)
\State Initialize \(u\gets()\), \(S\gets1\), and \(w_1,\ldots,w_H\gets0\).
\For{\(s=1,\ldots,H\)}
 \State Compute \(e_s(u)\); set \(w_s\gets S e_s(u)\), \(S\gets S(1-e_s(u))\).
 \State If \(S=0\), \textbf{break}; all later branch weights are zero.
 \State Draw \(U_s\sim K_s^\cap(\cdot\mid u)\) and append it to \(u\).
\EndFor
\State Set \(M\gets\sum_s w_s\), \(W_L\gets H^{-1}\sum_s w_s(H-s-L+1)_+\).
\State If \(M=0\), \Return zero for every output.
\State Draw \(J\) with probabilities \(w_s/M\).
\State Draw a fresh residual pair at \(U_{<J}\); continue the declared coupling, recording \(I_J,\ldots,I_H\).
\State Compute \(Z_L\gets (M/H)\sum_{j=0}^{\min(L,H-J+1)-1}I_{J+j}\), including \(L=H\).
\State \Return \((Z_H,M/H,Z_H-M/H)\), \((Z_L,Z_H-Z_L,W_L)_{L\in\mathcal L}\).
\end{algorithmic}
\end{algorithm}

The cost includes the entire overlap scan, retention or replay of \(U_{<J}\),
and residual continuation. Wall-clock efficiency depends on these costs
together with the per-replicate variance.

\subsection{Sampled profiles and endpoint uncertainty}
\label{sec:sampled-profile}

The joint measurements \(Z_L,W_L\) in
Equation~\eqref{eq:depth-measurement} satisfy
\[
 0\le Z_L\le\frac{ML}{H},\qquad
 0\le W_L\le\frac{M(H-L)}{H},\qquad
 0\le Z_L\le Z_L+W_L\le1.
\]
Let \(V_i^-\) and \(V_i^+\) be within-document replicate averages of
\(Z_L\) and \(Z_L+W_L\), respectively. For \(n\) independent document
vectors, define
\[
 \widehat r_\pm=\frac1n\sum_{i=1}^n V_i^\pm,\qquad
 r_\pm=\E[\widehat r_\pm],\qquad
 \epsilon_\alpha=\sqrt{\frac{\log(4/\alpha)}{2n}}.
\]
Hoeffding's inequality~\citep{hoeffding1963probability} yields
\(\Pcal(|\widehat r_\pm-r_\pm|>\epsilon_\alpha)
\le2e^{-2n\epsilon_\alpha^2}=\alpha/2\).
By a union bound, the corresponding mean full-horizon risk belongs to
\[
 \left[\max\{0,\widehat r_- -\epsilon_\alpha\},\quad
       \min\{1,\widehat r_+ +\epsilon_\alpha\}\right]
\]
with probability at least \(1-\alpha\) for each specified action and depth.
The enclosure combines unobserved tail capacity and sampled-endpoint
uncertainty.

The finite-state study below evaluates SWRB. The language-model studies
in Section~\ref{sec:experiments} measure the same \(\Rsmc,O_H,\Pi_H\)
with complete coupled rollouts for temporal analysis.

\subsection{Finite-state estimation}
\label{sec:measurement-validation}

We compare the conditional estimator with ordinary coupling in 24
enumerable binary systems, using \(H=3\) and 512 independent replicates
per estimator and system. For each system $c\in\{1,\ldots,24\}$, every prefix has independent
distributions $P,A$, obtained by normalizing independent
$\operatorname{Gamma}(1.5,1)+0.05$ weights. The intervention kernel is
\[
 Q=(1-\lambda)P+\lambda A,\qquad
 \lambda=\left(0.05+\frac{0.45c}{24}\right)UV,\qquad
 U\sim\operatorname{Unif}[0.5,1],\quad V\sim\operatorname{Unif}[0,1],
\]
with independent $U,V$. Exact enumeration at $H=3$ supplies
$\Rsmc$ for each system.

Figure~\ref{fig:swrb-validation} compares the sample means with exact
risks and shows lower SWRB sample variance in every system. The
naive-to-SWRB sample-variance ratio ranges from
\(11.1\) to \(289\), with median \(52.7\), consistent with
Proposition~\ref{thm:swrb}.

\begin{figure}[htbp]
\centering
\begin{subfigure}[t]{.485\linewidth}
\centering\begin{tikzpicture}[x=1cm,y=1cm,font=\figurefont,line cap=round]
\path[use as bounding box] (0,-.3) rectangle (7.5,4.4);
\draw[black!12] (1.35000,.75)--(1.35000,3.65);
\draw[black!12] (1.35000,0.75000)--(7.00000,0.75000);
\node[anchor=north] at (1.35000,.64) {0};
\node[anchor=east] at (1.25424,0.75000) {0};
\draw[black!12] (3.08846,.75)--(3.08846,3.65);
\draw[black!12] (1.35000,1.64231)--(7.00000,1.64231);
\node[anchor=north] at (3.08846,.64) {0.04};
\node[anchor=east] at (1.25424,1.64231) {0.04};
\draw[black!12] (4.82692,.75)--(4.82692,3.65);
\draw[black!12] (1.35000,2.53462)--(7.00000,2.53462);
\node[anchor=north] at (4.82692,.64) {0.08};
\node[anchor=east] at (1.25424,2.53462) {0.08};
\draw[black!12] (6.56538,.75)--(6.56538,3.65);
\draw[black!12] (1.35000,3.42692)--(7.00000,3.42692);
\node[anchor=north] at (6.56538,.64) {0.12};
\node[anchor=east] at (1.25424,3.42692) {0.12};
\draw[black!65] (1.35000,3.65)--(1.35000,.75)--(7.00000,.75);
\draw[black!55,dashed] (1.35000,.75)--(7.00000,3.65);
\node at (4.17500,.07) {Exact $R_{\mathrm{SMC}}$};
\node[rotate=90] at (.12,2.2) {Estimated $R_{\mathrm{SMC}}$};
\draw[black!65,fill=white,line width=.55pt] (1.97643,0.92428) circle (1.8pt);
\fill[figblue] (1.97643,1.07730) circle (1.3pt);
\draw[black!65,fill=white,line width=.55pt] (2.32051,1.35998) circle (1.8pt);
\fill[figblue] (2.32051,1.25550) circle (1.3pt);
\draw[black!65,fill=white,line width=.55pt] (1.84788,0.96785) circle (1.8pt);
\fill[figblue] (1.84788,1.00416) circle (1.3pt);
\draw[black!65,fill=white,line width=.55pt] (1.47019,0.83714) circle (1.8pt);
\fill[figblue] (1.47019,0.80996) circle (1.3pt);
\draw[black!65,fill=white,line width=.55pt] (3.01218,1.53425) circle (1.8pt);
\fill[figblue] (3.01218,1.59730) circle (1.3pt);
\draw[black!65,fill=white,line width=.55pt] (2.04255,1.04046) circle (1.8pt);
\fill[figblue] (2.04255,1.11365) circle (1.3pt);
\draw[black!65,fill=white,line width=.55pt] (2.78198,1.57782) circle (1.8pt);
\fill[figblue] (2.78198,1.49271) circle (1.3pt);
\draw[black!65,fill=white,line width=.55pt] (2.00632,1.02594) circle (1.8pt);
\fill[figblue] (2.00632,1.09287) circle (1.3pt);
\draw[black!65,fill=white,line width=.55pt] (2.94867,1.69401) circle (1.8pt);
\fill[figblue] (2.94867,1.56577) circle (1.3pt);
\draw[black!65,fill=white,line width=.55pt] (2.53842,1.35998) circle (1.8pt);
\fill[figblue] (2.53842,1.33157) circle (1.3pt);
\draw[black!65,fill=white,line width=.55pt] (2.91341,1.66496) circle (1.8pt);
\fill[figblue] (2.91341,1.53327) circle (1.3pt);
\draw[black!65,fill=white,line width=.55pt] (3.49764,2.01352) circle (1.8pt);
\fill[figblue] (3.49764,1.85926) circle (1.3pt);
\draw[black!65,fill=white,line width=.55pt] (3.02034,1.47616) circle (1.8pt);
\fill[figblue] (3.02034,1.65098) circle (1.3pt);
\draw[black!65,fill=white,line width=.55pt] (2.96665,1.44712) circle (1.8pt);
\fill[figblue] (2.96665,1.58062) circle (1.3pt);
\draw[black!65,fill=white,line width=.55pt] (3.75239,1.82472) circle (1.8pt);
\fill[figblue] (3.75239,1.96616) circle (1.3pt);
\draw[black!65,fill=white,line width=.55pt] (5.20422,2.75421) circle (1.8pt);
\fill[figblue] (5.20422,2.72299) circle (1.3pt);
\draw[black!65,fill=white,line width=.55pt] (3.84500,2.10066) circle (1.8pt);
\fill[figblue] (3.84500,2.10781) circle (1.3pt);
\draw[black!65,fill=white,line width=.55pt] (2.72150,1.49069) circle (1.8pt);
\fill[figblue] (2.72150,1.45225) circle (1.3pt);
\draw[black!65,fill=white,line width=.55pt] (4.41422,2.27494) circle (1.8pt);
\fill[figblue] (4.41422,2.36832) circle (1.3pt);
\draw[black!65,fill=white,line width=.55pt] (2.52906,1.46164) circle (1.8pt);
\fill[figblue] (2.52906,1.35970) circle (1.3pt);
\draw[black!65,fill=white,line width=.55pt] (4.87579,2.68159) circle (1.8pt);
\fill[figblue] (4.87579,2.58292) circle (1.3pt);
\draw[black!65,fill=white,line width=.55pt] (2.55535,1.28736) circle (1.8pt);
\fill[figblue] (2.55535,1.35519) circle (1.3pt);
\draw[black!65,fill=white,line width=.55pt] (6.61360,3.58203) circle (1.8pt);
\fill[figblue] (6.61360,3.46820) circle (1.3pt);
\draw[black!65,fill=white,line width=.55pt] (4.31090,2.21685) circle (1.8pt);
\fill[figblue] (4.31090,2.26910) circle (1.3pt);
\draw[black!65,fill=white] (1.68517,4.08) circle (1.8pt);
\node[anchor=west] at (1.87669,4.08) {Naive};
\fill[figblue] (3.55254,4.08) circle (1.3pt);
\node[anchor=west] at (3.74407,4.08) {SWRB};
\draw[black!55,dashed] (5.08475,4.08)--(5.46780,4.08);
\node[anchor=west] at (5.54441,4.08) {Identity};
\end{tikzpicture}
\caption{Estimates and exact risks}
\end{subfigure}\hfill
\begin{subfigure}[t]{.485\linewidth}
\centering\begin{tikzpicture}[x=1cm,y=1cm,font=\figurefont,line cap=round]
\path[use as bounding box] (0,-.3) rectangle (7.5,4.4);
\draw[black!12] (1.35000,0.75000)--(7.00000,0.75000);
\node[anchor=east] at (1.25424,0.75000) {$10^{0}$};
\draw[black!12] (1.35000,1.88991)--(7.00000,1.88991);
\node[anchor=east] at (1.25424,1.88991) {$10^{1}$};
\draw[black!12] (1.35000,3.02981)--(7.00000,3.02981);
\node[anchor=east] at (1.25424,3.02981) {$10^{2}$};
\node[anchor=north] at (1.35000,.64) {1};
\node[anchor=north] at (2.57826,.64) {6};
\node[anchor=north] at (4.05217,.64) {12};
\node[anchor=north] at (5.52609,.64) {18};
\node[anchor=north] at (7.00000,.64) {24};
\draw[black!65] (1.35000,3.65)--(1.35000,.75)--(7.00000,.75);
\node at (4.17500,.07) {Finite-state cell (sorted)};
\node[rotate=90] at (.13,2.2) {Naive/SWRB variance};
\draw[figrate,dashed] (1.35000,2.71282)--(7.00000,2.71282);
\node at (4.17500,4.08) {24/24 cells; median $52.7\times$};
\draw[figblue,line width=.65pt] (1.35000,1.94170)--(1.59565,2.05523)--(1.84130,2.09017)--(2.08696,2.22618)--(2.33261,2.27352)--(2.57826,2.28881)--(2.82391,2.48470)--(3.06956,2.48811)--(3.31521,2.54457)--(3.56087,2.56936)--(3.80652,2.59798)--(4.05217,2.70845)--(4.29783,2.71716)--(4.54348,2.76298)--(4.78913,2.76694)--(5.03479,2.83726)--(5.28044,2.91209)--(5.52609,2.93112)--(5.77174,3.01374)--(6.01739,3.06581)--(6.26304,3.16603)--(6.50870,3.21395)--(6.75435,3.35238)--(7.00000,3.55566);
\fill[figblue] (1.35000,1.94170) circle (1.5pt);
\fill[figblue] (1.59565,2.05523) circle (1.5pt);
\fill[figblue] (1.84130,2.09017) circle (1.5pt);
\fill[figblue] (2.08696,2.22618) circle (1.5pt);
\fill[figblue] (2.33261,2.27352) circle (1.5pt);
\fill[figblue] (2.57826,2.28881) circle (1.5pt);
\fill[figblue] (2.82391,2.48470) circle (1.5pt);
\fill[figblue] (3.06956,2.48811) circle (1.5pt);
\fill[figblue] (3.31521,2.54457) circle (1.5pt);
\fill[figblue] (3.56087,2.56936) circle (1.5pt);
\fill[figblue] (3.80652,2.59798) circle (1.5pt);
\fill[figblue] (4.05217,2.70845) circle (1.5pt);
\fill[figblue] (4.29783,2.71716) circle (1.5pt);
\fill[figblue] (4.54348,2.76298) circle (1.5pt);
\fill[figblue] (4.78913,2.76694) circle (1.5pt);
\fill[figblue] (5.03479,2.83726) circle (1.5pt);
\fill[figblue] (5.28044,2.91209) circle (1.5pt);
\fill[figblue] (5.52609,2.93112) circle (1.5pt);
\fill[figblue] (5.77174,3.01374) circle (1.5pt);
\fill[figblue] (6.01739,3.06581) circle (1.5pt);
\fill[figblue] (6.26304,3.16603) circle (1.5pt);
\fill[figblue] (6.50870,3.21395) circle (1.5pt);
\fill[figblue] (6.75435,3.35238) circle (1.5pt);
\fill[figblue] (7.00000,3.55566) circle (1.5pt);
\end{tikzpicture}
\caption{Empirical variance reduction}
\end{subfigure}
\caption{Conditional measurements recover finite-state risk with lower
sample variance in all 24 systems. Left: Monte Carlo means from 512
replicates for each estimator and system. Right: ratios of sample variances,
sorted by magnitude; the dashed line marks the median.}
\label{fig:swrb-validation}
\end{figure}

\begin{samepage}
Counting the overlap scan and residual continuation, the mean number of
kernel-pair evaluations rises by a factor of \(1.37\), while the
ratio-of-means variance--cost advantage is \(20.0\). These counts
quantify the computational trade-off in the finite-state systems;
Transformer execution also depends on cache retention and replay.
\end{samepage}

\section{How Eviction Shapes the Disagreement Trajectory}
\label{sec:experiments}
\label{sec:results}

Paired complete trajectories connect the temporal decomposition to
KV-cache eviction in two language models. We examine entry into
divergence, subsequent branch dynamics, and the information retained
by shorter observations.

\subsection{Paired trajectories and study design}
\label{sec:study}

We study Meta-Llama-3.1-8B-Instruct (Llama) and Qwen2.5-7B-Instruct (Qwen)
in six HELMET/RULER
environment--length strata (4K, 16K, and 32K)
\citep{yen2024helmet,hsieh2024ruler}. The complete-trajectory study uses
288 validation documents, 48 per stratum, and \(K=64\) replicates per
action. Actions retain 512 prompt entries with SnapKV, 50\% with SnapKV,
or the same prompt-cache bytes with a recent-token selector. The targeted higher-retention
study uses another 288 independent documents, \(K=32\), and SnapKV versus
recent-token retention at 90\%. Both studies have horizon \(H=128\),
temperature-one full-softmax sampling, persistent compact caches, and
same-path full-retention controls (\(K=2\)); these controls yield zero TV.
Generated entries are retained, and EOS is absorbing through the horizon.

The HELMET strata use retrieved-context and citation tasks from NQ,
TriviaQA, HotpotQA, PopQA, ASQA, and QAMPARI; the citation prompts use
HELMET's top-2000 retrieval files. RULER contributes its 13 synthetic
tasks: eight needle-retrieval variants, variable tracking, common- and
frequent-word extraction, and two question-answering tasks. Length
labels denote nominal input bands; retention uses the actual tokenized
prompt length.

For tokenized prompt length $n$, half and 90\% retention preserve
$\lfloor n/2\rfloor$ and $\lfloor9n/10\rfloor$ prompt entries per KV
head and layer. Recent retention keeps four initial sink positions and
the most recent remaining entries. At the same retention, SnapKV and recent
retention match exactly in prompt-KV payload, indices, metadata, and initial
resident decode bytes. SnapKV uses a 64-token
scoring window and kernel size five. Eviction follows prefill, preserving
the already computed first-token distribution.

For each action and replicate, we record the reference along its own
history \(X\), the compressed model along its own history \(Y\), and a
compressed copy forced along \(X\). Besides token mismatch, these give
\begin{equation}
 \begin{aligned}
 R(a)&=H^{-1}\sum_t\E_{\Gamma(P,Q^a)}
   \TV\!\left(P_t(\cdot\mid X_{<t}),Q_t^a(\cdot\mid Y_{<t})\right),\\
 F(a)&=H^{-1}\sum_t\E_{X\sim P}
   \TV\!\left(P_t(\cdot\mid X_{<t}),Q_t^a(\cdot\mid X_{<t})\right).
 \end{aligned}
 \label{eq:paired-history-objects}
\end{equation}
Here \(R=\Rsmc\) by Equation~\eqref{eq:cross-history-tv}; its TV and
token estimators differ at finite \(K\). We write \(\delta_t^a\) and
\(f_t^a\) for the respective TV integrands. The three paths are paired
within an action. Across actions we pair documents, whose realized
reference continuations may differ. Documents are the statistical unit,
with equal weighting of the six strata.

Each study uses 36 development documents (six per stratum), disjoint from
both evaluation cohorts. Development variance and measured cost determine
evaluation sample sizes before evaluation outcomes are observed, using
$K=32$ development replicates in the first study and $K=16$ at higher retention.

We bootstrap documents within strata, pairing sampled documents across
actions: 10,000 draws for calibration and higher-retention analyses, and
2,000 for first-study exploration. First-study occurrence, exposure, and
time profiles are post-outcome exploratory analyses with pointwise 95\%
intervals. The independent 90\% study prespecifies eight pooled branch-TV
and saturation endpoints with family-adjusted intervals; supporting
90\% analyses are descriptive with pointwise 95\% intervals.

\subsection{Entry into divergence organizes cumulative differences}

SnapKV-half delays first divergence.
On Llama, the fraction diverging by step 32 is 61.05\%, compared with
92.35\% for SnapKV-512 and 90.81\% for recent-half; by step 128 these
fractions are 92.29\%, 98.14\%, and 97.66\%. Qwen shows the same ordering
(Figure~\ref{fig:entry-exposure}). Among trajectories diverging within the
horizon, mean entry times are 30.37 versus 11.69 and 10.59 steps on
Llama, and 29.37 versus 12.18 and 12.14 on Qwen.

\begin{figure}[t]
\centering
\actionlegend\par\smallskip
\begin{subfigure}[b]{.485\linewidth}
\centering\input{figures/native-discovery/entry-compact.tex}
\caption{Entry into divergence: CDF (\%)}
\end{subfigure}\hfill
\begin{subfigure}[b]{.485\linewidth}
\centering\begin{tikzpicture}[x=1cm,y=1cm,font=\figurefont,line cap=round]
\path[use as bounding box] (0,0) rectangle (6.7,3.0);
\draw[black!12,line width=.3pt] (2.15000,0.40000) -- (2.15000,2.86000);
\node[inner sep=0pt,] at (2.15000,0.17000) {0};
\draw[black!12,line width=.3pt] (3.90000,0.40000) -- (3.90000,2.86000);
\node[inner sep=0pt,] at (3.90000,0.17000) {10};
\draw[black!12,line width=.3pt] (5.65000,0.40000) -- (5.65000,2.86000);
\node[inner sep=0pt,] at (5.65000,0.17000) {20};
\node[inner sep=0pt,anchor=east] at (2.00000,2.58000) {Llama / 512};
\path[fill=figexposure] (2.15000,2.41000) rectangle (5.15550,2.75000);
\path[fill=figrate] (5.15550,2.41000) rectangle (5.62405,2.75000);
\path[fill=figfloor] (5.62405,2.41000) rectangle (5.63206,2.75000);
\node[inner sep=0pt,text=white] at (3.65275,2.58000) {86.3\%};
\draw[black!85,line width=.65pt] (5.43636,2.58000) -- (5.81871,2.58000);
\draw[black!85,line width=.65pt] (5.43636,2.50000) -- (5.43636,2.66000);
\draw[black!85,line width=.65pt] (5.81871,2.50000) -- (5.81871,2.66000);
\node[inner sep=0pt,anchor=east] at (2.00000,2.08000) {Llama / Recent};
\path[fill=figexposure] (2.15000,1.91000) rectangle (5.22927,2.25000);
\path[fill=figrate] (5.22927,1.91000) rectangle (5.78029,2.25000);
\path[fill=figfloor] (5.78029,1.91000) rectangle (5.78763,2.25000);
\node[inner sep=0pt,text=white] at (3.68963,2.08000) {84.7\%};
\draw[black!85,line width=.65pt] (5.48469,2.08000) -- (6.08978,2.08000);
\draw[black!85,line width=.65pt] (5.48469,2.00000) -- (5.48469,2.16000);
\draw[black!85,line width=.65pt] (6.08978,2.00000) -- (6.08978,2.16000);
\node[inner sep=0pt,anchor=east] at (2.00000,1.33000) {Qwen / 512};
\path[fill=figexposure] (2.15000,1.16000) rectangle (4.77372,1.50000);
\path[fill=figrate] (4.77372,1.16000) rectangle (5.06535,1.50000);
\path[fill=figfloor] (5.06535,1.16000) rectangle (5.06966,1.50000);
\node[inner sep=0pt,text=white] at (3.46186,1.33000) {89.9\%};
\draw[black!85,line width=.65pt] (4.86572,1.33000) -- (5.28922,1.33000);
\draw[black!85,line width=.65pt] (4.86572,1.25000) -- (4.86572,1.41000);
\draw[black!85,line width=.65pt] (5.28922,1.25000) -- (5.28922,1.41000);
\node[inner sep=0pt,anchor=east] at (2.00000,0.83000) {Qwen / Recent};
\path[fill=figexposure] (2.15000,0.66000) rectangle (4.76184,1.00000);
\path[fill=figrate] (4.76184,0.66000) rectangle (5.13786,1.00000);
\path[fill=figfloor] (5.13786,0.66000) rectangle (5.14194,1.00000);
\node[inner sep=0pt,text=white] at (3.45592,0.83000) {87.3\%};
\draw[black!85,line width=.65pt] (4.84373,0.83000) -- (5.46425,0.83000);
\draw[black!85,line width=.65pt] (4.84373,0.75000) -- (4.84373,0.91000);
\draw[black!85,line width=.65pt] (5.46425,0.75000) -- (5.46425,0.91000);
\end{tikzpicture}
\caption{Cumulative mismatch gap (pp)}
\end{subfigure}\par\smallskip
\figblock{figexposure}\ Exposure\quad\figblock{figrate}\ Mismatch rate\quad
\figblock{figfloor}\ First mismatch
\caption{Post-divergence exposure accounts for most of the mismatch gap.
Left: \(\Pcal(\tau\le t)\) versus generation position \(t\), including
trajectories agreeing through \(H=128\).
Right: contrasts with SnapKV-half follow Equation~\eqref{eq:exposure-contrast};
component lengths and exposure shares are point estimates. CDF shading and
total error bars give pointwise 95\% document-bootstrap intervals. Recent-half
uses identical prompt-cache bytes. Exploratory analysis: \(N=288,K=64\) per action and model.}
\label{fig:entry-exposure}
\end{figure}

SnapKV-512 minus SnapKV-half gives 19.90 and 16.68 percentage points
(pp) of token mismatch on Llama and Qwen. The exposure terms contribute
17.17 [16.20, 18.13] and 14.99 [13.87, 16.22] pp, respectively: 86.3\%
and 89.9\% of the point-estimated gaps. The equal-budget recent-half
contrasts give 20.79 and 17.10 pp, with exposure shares 84.7\% and
87.3\%. All 24 model--stratum--contrast point decompositions have an
exposure majority (78.1--96.1\%). Equation~\eqref{eq:exposure-contrast}
holds exactly for the empirical token-mismatch moments; each component
share divides its contribution by the observed mismatch gap. Thus the
pattern is also present when retention quantity is fixed and the selected
entries change.

Mismatch during post-divergence time is high for every action:
\(C_H\) ranges from 89.74\% to 93.68\% on Llama and 93.59\% to
96.20\% on Qwen. Its between-action differences contribute
1.67--3.15 pp to the aggregate gaps.

\subsection{Generation time and divergence time reveal different dynamics}
\label{sec:two-clocks}

Figure~\ref{fig:two-clocks} separates two temporal patterns: intervention
gaps narrow along generation positions, while discrepancy stays high
within branches aligned at divergence. For the latter, each action's fixed
\(\tau\le64\) cohort gives every member 64 strictly post-divergence steps
and ensures \(\tau+j\le H\) for \(1\le j\le64\).

For SnapKV-half, branch TV rises from 84.45\% to 91.56\% on Llama and
from 85.28\% to 95.76\% on Qwen between the early and late windows
(Figure~\ref{fig:two-clocks}b,d).

\begin{figure}[t]
\centering
\actionlegend\par\smallskip
\begin{subfigure}[b]{.485\linewidth}
\centering\begin{tikzpicture}[x=1cm,y=1cm,font=\figurefont,line cap=round]
\path[use as bounding box] (0,0) rectangle (6.7,2.0);
\draw[black!12,line width=.3pt] (0.85000,0.48000) -- (6.40000,0.48000);
\node[inner sep=0pt,anchor=east] at (0.73000,0.48000) {0};
\draw[black!12,line width=.3pt] (0.85000,1.14000) -- (6.40000,1.14000);
\node[inner sep=0pt,anchor=east] at (0.73000,1.14000) {20};
\draw[black!12,line width=.3pt] (0.85000,1.80000) -- (6.40000,1.80000);
\node[inner sep=0pt,anchor=east] at (0.73000,1.80000) {40};
\draw[black!75,line width=.45pt] (0.85000,1.80000) -- (0.85000,0.48000) -- (6.40000,0.48000);
\draw[black!75,line width=.45pt] (0.85000,0.48000) -- (0.85000,0.41000);
\node[inner sep=0pt,anchor=north] at (0.85000,0.35000) {1--32};
\draw[black!75,line width=.45pt] (3.62500,0.48000) -- (3.62500,0.41000);
\node[inner sep=0pt,anchor=north] at (3.62500,0.35000) {33--64};
\draw[black!75,line width=.45pt] (6.40000,0.48000) -- (6.40000,0.41000);
\node[inner sep=0pt,anchor=north] at (6.40000,0.35000) {65--128};
\draw[figred,line width=.85pt] (0.85000,1.56113) -- (3.62500,1.27390) -- (6.40000,0.85524);
\fill[figred] (0.85000,1.56113) circle[radius=.045cm];
\fill[figred] (3.62500,1.27390) circle[radius=.045cm];
\fill[figred] (6.40000,0.85524) circle[radius=.045cm];
\draw[figpurple,line width=.85pt] (0.85000,1.70740) -- (3.62500,1.25071) -- (6.40000,0.85295);
\fill[figpurple] (0.85000,1.70740) circle[radius=.045cm];
\fill[figpurple] (3.62500,1.25071) circle[radius=.045cm];
\fill[figpurple] (6.40000,0.85295) circle[radius=.045cm];
\end{tikzpicture}
\caption{Llama: generation-clock gap (pp)}
\end{subfigure}\hfill
\begin{subfigure}[b]{.485\linewidth}
\centering\begin{tikzpicture}[x=1cm,y=1cm,font=\figurefont,line cap=round]
\path[use as bounding box] (0,0) rectangle (6.7,2.0);
\draw[black!12,line width=.3pt] (0.85000,0.60000) -- (6.40000,0.60000);
\node[inner sep=0pt,anchor=east] at (0.73000,0.60000) {80};
\draw[black!12,line width=.3pt] (0.85000,1.20000) -- (6.40000,1.20000);
\node[inner sep=0pt,anchor=east] at (0.73000,1.20000) {90};
\draw[black!12,line width=.3pt] (0.85000,1.80000) -- (6.40000,1.80000);
\node[inner sep=0pt,anchor=east] at (0.73000,1.80000) {100};
\draw[black!75,line width=.45pt] (0.85000,1.80000) -- (0.85000,0.48000) -- (6.40000,0.48000);
\draw[black!75,line width=.45pt] (0.85000,0.48000) -- (0.85000,0.41000);
\node[inner sep=0pt,anchor=north] at (0.85000,0.35000) {1};
\draw[black!75,line width=.45pt] (2.17143,0.48000) -- (2.17143,0.41000);
\node[inner sep=0pt,anchor=north] at (2.17143,0.35000) {16};
\draw[black!75,line width=.45pt] (3.58095,0.48000) -- (3.58095,0.41000);
\node[inner sep=0pt,anchor=north] at (3.58095,0.35000) {32};
\draw[black!75,line width=.45pt] (4.99048,0.48000) -- (4.99048,0.41000);
\node[inner sep=0pt,anchor=north] at (4.99048,0.35000) {48};
\draw[black!75,line width=.45pt] (6.40000,0.48000) -- (6.40000,0.41000);
\node[inner sep=0pt,anchor=north] at (6.40000,0.35000) {64};
\path[fill=black,fill opacity=.04] (0.85000,0.48000) rectangle (2.17143,1.80000);
\path[fill=black,fill opacity=.04] (5.07857,0.48000) rectangle (6.40000,1.80000);
\draw[figred,line width=.85pt] (0.85000,0.87726) -- (0.93810,0.79792) -- (1.02619,0.83320) -- (1.11429,0.98525) -- (1.20238,0.97120) -- (1.29048,1.01878) -- (1.37857,1.03797) -- (1.46667,1.14802) -- (1.55476,1.14698) -- (1.64286,1.12350) -- (1.73095,1.15552) -- (1.81905,1.24309) -- (1.90714,1.24842) -- (1.99524,1.24259) -- (2.08333,1.25490) -- (2.17143,1.30309) -- (2.25952,1.30339) -- (2.34762,1.31079) -- (2.43571,1.32243) -- (2.52381,1.36327) -- (2.61190,1.37195) -- (2.70000,1.36998) -- (2.78810,1.37973) -- (2.87619,1.39690) -- (2.96429,1.39146) -- (3.05238,1.39953) -- (3.14048,1.40026) -- (3.22857,1.42276) -- (3.31667,1.41851) -- (3.40476,1.42551) -- (3.49286,1.42469) -- (3.58095,1.44856) -- (3.66905,1.43926) -- (3.75714,1.45117) -- (3.84524,1.44201) -- (3.93333,1.45636) -- (4.02143,1.46572) -- (4.10952,1.47719) -- (4.19762,1.47979) -- (4.28571,1.47244) -- (4.37381,1.48070) -- (4.46190,1.48330) -- (4.55000,1.47444) -- (4.63810,1.47388) -- (4.72619,1.47710) -- (4.81429,1.46987) -- (4.90238,1.47252) -- (4.99048,1.47400) -- (5.07857,1.47957) -- (5.16667,1.49058) -- (5.25476,1.48141) -- (5.34286,1.48469) -- (5.43095,1.48049) -- (5.51905,1.47662) -- (5.60714,1.47617) -- (5.69524,1.48106) -- (5.78333,1.48047) -- (5.87143,1.47992) -- (5.95952,1.47838) -- (6.04762,1.48274) -- (6.13571,1.47644) -- (6.22381,1.47257) -- (6.31190,1.46858) -- (6.40000,1.46827);
\draw[figblue,line width=.85pt] (0.85000,0.77464) -- (0.93810,0.75820) -- (1.02619,0.76278) -- (1.11429,0.79426) -- (1.20238,0.77021) -- (1.29048,0.81144) -- (1.37857,0.81444) -- (1.46667,0.87044) -- (1.55476,0.89354) -- (1.64286,0.87570) -- (1.73095,0.88811) -- (1.81905,0.96146) -- (1.90714,0.95640) -- (1.99524,0.96106) -- (2.08333,0.98449) -- (2.17143,0.99665) -- (2.25952,1.00801) -- (2.34762,1.02331) -- (2.43571,1.03622) -- (2.52381,1.06809) -- (2.61190,1.09648) -- (2.70000,1.07384) -- (2.78810,1.09619) -- (2.87619,1.12087) -- (2.96429,1.12927) -- (3.05238,1.13087) -- (3.14048,1.14648) -- (3.22857,1.17133) -- (3.31667,1.18641) -- (3.40476,1.18541) -- (3.49286,1.18618) -- (3.58095,1.20016) -- (3.66905,1.21523) -- (3.75714,1.22325) -- (3.84524,1.21975) -- (3.93333,1.24303) -- (4.02143,1.24527) -- (4.10952,1.23749) -- (4.19762,1.24829) -- (4.28571,1.25727) -- (4.37381,1.26943) -- (4.46190,1.27910) -- (4.55000,1.27212) -- (4.63810,1.27322) -- (4.72619,1.27853) -- (4.81429,1.27501) -- (4.90238,1.27415) -- (4.99048,1.29230) -- (5.07857,1.28426) -- (5.16667,1.29067) -- (5.25476,1.28947) -- (5.34286,1.28774) -- (5.43095,1.29221) -- (5.51905,1.30090) -- (5.60714,1.29655) -- (5.69524,1.29285) -- (5.78333,1.29433) -- (5.87143,1.28959) -- (5.95952,1.28740) -- (6.04762,1.30035) -- (6.13571,1.29843) -- (6.22381,1.30419) -- (6.31190,1.29854) -- (6.40000,1.29260);
\draw[figpurple,line width=.85pt] (0.85000,1.15990) -- (0.93810,1.11490) -- (1.02619,0.81814) -- (1.11429,0.95574) -- (1.20238,1.01054) -- (1.29048,1.04522) -- (1.37857,1.11042) -- (1.46667,1.14709) -- (1.55476,1.18028) -- (1.64286,1.16358) -- (1.73095,1.18265) -- (1.81905,1.26382) -- (1.90714,1.26353) -- (1.99524,1.29738) -- (2.08333,1.29916) -- (2.17143,1.33511) -- (2.25952,1.34038) -- (2.34762,1.34945) -- (2.43571,1.35636) -- (2.52381,1.36448) -- (2.61190,1.39759) -- (2.70000,1.38686) -- (2.78810,1.39479) -- (2.87619,1.41281) -- (2.96429,1.42481) -- (3.05238,1.41626) -- (3.14048,1.42606) -- (3.22857,1.43362) -- (3.31667,1.43869) -- (3.40476,1.44547) -- (3.49286,1.43745) -- (3.58095,1.45615) -- (3.66905,1.46270) -- (3.75714,1.45556) -- (3.84524,1.45726) -- (3.93333,1.47375) -- (4.02143,1.46950) -- (4.10952,1.47882) -- (4.19762,1.48834) -- (4.28571,1.48728) -- (4.37381,1.49392) -- (4.46190,1.49224) -- (4.55000,1.48989) -- (4.63810,1.49512) -- (4.72619,1.49722) -- (4.81429,1.49333) -- (4.90238,1.49163) -- (4.99048,1.48930) -- (5.07857,1.49374) -- (5.16667,1.49302) -- (5.25476,1.49817) -- (5.34286,1.49939) -- (5.43095,1.49654) -- (5.51905,1.49907) -- (5.60714,1.49691) -- (5.69524,1.49797) -- (5.78333,1.50389) -- (5.87143,1.50298) -- (5.95952,1.49563) -- (6.04762,1.49424) -- (6.13571,1.50061) -- (6.22381,1.49170) -- (6.31190,1.49827) -- (6.40000,1.49188);
\end{tikzpicture}
\caption{Llama: post-divergence TV (\%)}
\end{subfigure}\par\smallskip
\begin{subfigure}[b]{.485\linewidth}
\centering\begin{tikzpicture}[x=1cm,y=1cm,font=\figurefont,line cap=round]
\path[use as bounding box] (0,0) rectangle (6.7,2.0);
\draw[black!12,line width=.3pt] (0.85000,0.48000) -- (6.40000,0.48000);
\node[inner sep=0pt,anchor=east] at (0.73000,0.48000) {0};
\draw[black!12,line width=.3pt] (0.85000,1.14000) -- (6.40000,1.14000);
\node[inner sep=0pt,anchor=east] at (0.73000,1.14000) {20};
\draw[black!12,line width=.3pt] (0.85000,1.80000) -- (6.40000,1.80000);
\node[inner sep=0pt,anchor=east] at (0.73000,1.80000) {40};
\draw[black!75,line width=.45pt] (0.85000,1.80000) -- (0.85000,0.48000) -- (6.40000,0.48000);
\draw[black!75,line width=.45pt] (0.85000,0.48000) -- (0.85000,0.41000);
\node[inner sep=0pt,anchor=north] at (0.85000,0.35000) {1--32};
\draw[black!75,line width=.45pt] (3.62500,0.48000) -- (3.62500,0.41000);
\node[inner sep=0pt,anchor=north] at (3.62500,0.35000) {33--64};
\draw[black!75,line width=.45pt] (6.40000,0.48000) -- (6.40000,0.41000);
\node[inner sep=0pt,anchor=north] at (6.40000,0.35000) {65--128};
\draw[figred,line width=.85pt] (0.85000,1.48850) -- (3.62500,1.14113) -- (6.40000,0.74611);
\fill[figred] (0.85000,1.48850) circle[radius=.045cm];
\fill[figred] (3.62500,1.14113) circle[radius=.045cm];
\fill[figred] (6.40000,0.74611) circle[radius=.045cm];
\draw[figpurple,line width=.85pt] (0.85000,1.57028) -- (3.62500,1.11139) -- (6.40000,0.74700);
\fill[figpurple] (0.85000,1.57028) circle[radius=.045cm];
\fill[figpurple] (3.62500,1.11139) circle[radius=.045cm];
\fill[figpurple] (6.40000,0.74700) circle[radius=.045cm];
\end{tikzpicture}
\caption{Qwen: generation-clock gap (pp)}
\end{subfigure}\hfill
\begin{subfigure}[b]{.485\linewidth}
\centering\begin{tikzpicture}[x=1cm,y=1cm,font=\figurefont,line cap=round]
\path[use as bounding box] (0,0) rectangle (6.7,2.0);
\draw[black!12,line width=.3pt] (0.85000,0.60000) -- (6.40000,0.60000);
\node[inner sep=0pt,anchor=east] at (0.73000,0.60000) {80};
\draw[black!12,line width=.3pt] (0.85000,1.20000) -- (6.40000,1.20000);
\node[inner sep=0pt,anchor=east] at (0.73000,1.20000) {90};
\draw[black!12,line width=.3pt] (0.85000,1.80000) -- (6.40000,1.80000);
\node[inner sep=0pt,anchor=east] at (0.73000,1.80000) {100};
\draw[black!75,line width=.45pt] (0.85000,1.80000) -- (0.85000,0.48000) -- (6.40000,0.48000);
\draw[black!75,line width=.45pt] (0.85000,0.48000) -- (0.85000,0.41000);
\node[inner sep=0pt,anchor=north] at (0.85000,0.35000) {1};
\draw[black!75,line width=.45pt] (2.17143,0.48000) -- (2.17143,0.41000);
\node[inner sep=0pt,anchor=north] at (2.17143,0.35000) {16};
\draw[black!75,line width=.45pt] (3.58095,0.48000) -- (3.58095,0.41000);
\node[inner sep=0pt,anchor=north] at (3.58095,0.35000) {32};
\draw[black!75,line width=.45pt] (4.99048,0.48000) -- (4.99048,0.41000);
\node[inner sep=0pt,anchor=north] at (4.99048,0.35000) {48};
\draw[black!75,line width=.45pt] (6.40000,0.48000) -- (6.40000,0.41000);
\node[inner sep=0pt,anchor=north] at (6.40000,0.35000) {64};
\path[fill=black,fill opacity=.04] (0.85000,0.48000) rectangle (2.17143,1.80000);
\path[fill=black,fill opacity=.04] (5.07857,0.48000) rectangle (6.40000,1.80000);
\draw[figred,line width=.85pt] (0.85000,1.00860) -- (0.93810,0.81524) -- (1.02619,0.64992) -- (1.11429,0.65134) -- (1.20238,0.72807) -- (1.29048,0.86148) -- (1.37857,0.89108) -- (1.46667,1.00755) -- (1.55476,1.03029) -- (1.64286,1.07720) -- (1.73095,1.14476) -- (1.81905,1.22915) -- (1.90714,1.22202) -- (1.99524,1.21232) -- (2.08333,1.22162) -- (2.17143,1.31148) -- (2.25952,1.30023) -- (2.34762,1.35493) -- (2.43571,1.37069) -- (2.52381,1.42057) -- (2.61190,1.43106) -- (2.70000,1.41384) -- (2.78810,1.41330) -- (2.87619,1.42829) -- (2.96429,1.40155) -- (3.05238,1.39741) -- (3.14048,1.39594) -- (3.22857,1.44692) -- (3.31667,1.46911) -- (3.40476,1.50459) -- (3.49286,1.51834) -- (3.58095,1.54239) -- (3.66905,1.53151) -- (3.75714,1.55112) -- (3.84524,1.53246) -- (3.93333,1.53664) -- (4.02143,1.55952) -- (4.10952,1.55551) -- (4.19762,1.56175) -- (4.28571,1.57903) -- (4.37381,1.57921) -- (4.46190,1.57699) -- (4.55000,1.58724) -- (4.63810,1.58570) -- (4.72619,1.59723) -- (4.81429,1.59730) -- (4.90238,1.60851) -- (4.99048,1.61015) -- (5.07857,1.61202) -- (5.16667,1.62310) -- (5.25476,1.61896) -- (5.34286,1.62653) -- (5.43095,1.63371) -- (5.51905,1.62971) -- (5.60714,1.63614) -- (5.69524,1.64160) -- (5.78333,1.64646) -- (5.87143,1.64616) -- (5.95952,1.65181) -- (6.04762,1.65819) -- (6.13571,1.65803) -- (6.22381,1.65849) -- (6.31190,1.65874) -- (6.40000,1.66517);
\draw[figblue,line width=.85pt] (0.85000,0.82571) -- (0.93810,0.73211) -- (1.02619,0.73186) -- (1.11429,0.74716) -- (1.20238,0.80384) -- (1.29048,0.84921) -- (1.37857,0.83139) -- (1.46667,0.89442) -- (1.55476,0.89432) -- (1.64286,0.94305) -- (1.73095,1.00624) -- (1.81905,1.03047) -- (1.90714,1.04157) -- (1.99524,1.07420) -- (2.08333,1.10625) -- (2.17143,1.15381) -- (2.25952,1.15815) -- (2.34762,1.18395) -- (2.43571,1.17978) -- (2.52381,1.20848) -- (2.61190,1.22522) -- (2.70000,1.23314) -- (2.78810,1.24675) -- (2.87619,1.28619) -- (2.96429,1.29295) -- (3.05238,1.31744) -- (3.14048,1.33710) -- (3.22857,1.36933) -- (3.31667,1.37427) -- (3.40476,1.37368) -- (3.49286,1.37803) -- (3.58095,1.38502) -- (3.66905,1.42276) -- (3.75714,1.43401) -- (3.84524,1.44596) -- (3.93333,1.45187) -- (4.02143,1.44798) -- (4.10952,1.46479) -- (4.19762,1.45960) -- (4.28571,1.47785) -- (4.37381,1.48011) -- (4.46190,1.48570) -- (4.55000,1.48589) -- (4.63810,1.48896) -- (4.72619,1.50164) -- (4.81429,1.49538) -- (4.90238,1.51239) -- (4.99048,1.52298) -- (5.07857,1.52700) -- (5.16667,1.52690) -- (5.25476,1.53305) -- (5.34286,1.52637) -- (5.43095,1.53796) -- (5.51905,1.54357) -- (5.60714,1.54056) -- (5.69524,1.54527) -- (5.78333,1.54313) -- (5.87143,1.54450) -- (5.95952,1.55752) -- (6.04762,1.55925) -- (6.13571,1.55557) -- (6.22381,1.55467) -- (6.31190,1.56432) -- (6.40000,1.56729);
\draw[figpurple,line width=.85pt] (0.85000,1.11322) -- (0.93810,1.16697) -- (1.02619,1.25646) -- (1.11429,1.24768) -- (1.20238,1.32266) -- (1.29048,1.22751) -- (1.37857,0.88971) -- (1.46667,0.97644) -- (1.55476,1.01548) -- (1.64286,1.08122) -- (1.73095,1.26506) -- (1.81905,1.28318) -- (1.90714,1.26912) -- (1.99524,1.28438) -- (2.08333,1.30768) -- (2.17143,1.36538) -- (2.25952,1.34205) -- (2.34762,1.30611) -- (2.43571,1.34405) -- (2.52381,1.37004) -- (2.61190,1.37123) -- (2.70000,1.43224) -- (2.78810,1.43809) -- (2.87619,1.40998) -- (2.96429,1.44433) -- (3.05238,1.45423) -- (3.14048,1.45102) -- (3.22857,1.49125) -- (3.31667,1.48234) -- (3.40476,1.48522) -- (3.49286,1.48437) -- (3.58095,1.50095) -- (3.66905,1.55253) -- (3.75714,1.55512) -- (3.84524,1.53971) -- (3.93333,1.54190) -- (4.02143,1.54602) -- (4.10952,1.53135) -- (4.19762,1.54678) -- (4.28571,1.57497) -- (4.37381,1.57253) -- (4.46190,1.58369) -- (4.55000,1.57944) -- (4.63810,1.60190) -- (4.72619,1.61564) -- (4.81429,1.61417) -- (4.90238,1.62527) -- (4.99048,1.62454) -- (5.07857,1.62125) -- (5.16667,1.63296) -- (5.25476,1.64201) -- (5.34286,1.64315) -- (5.43095,1.64818) -- (5.51905,1.65036) -- (5.60714,1.66199) -- (5.69524,1.67215) -- (5.78333,1.66354) -- (5.87143,1.67259) -- (5.95952,1.67830) -- (6.04762,1.67918) -- (6.13571,1.67855) -- (6.22381,1.68366) -- (6.31190,1.68542) -- (6.40000,1.67909);
\end{tikzpicture}
\caption{Qwen: post-divergence TV (\%)}
\end{subfigure}
\caption{Smaller gaps between interventions coexist with rising discrepancy
within divergent branches. Left: intervention-minus-SnapKV-half TV gaps
over generation-position blocks. Right: complete TV curves at strictly
post-divergence lags 1--64 within each action's fixed \(\tau\le64\) cohort.
Shading marks the early (1--16) and late (49--64) lag windows.
Curves are exploratory point estimates, \(N=288,K=64\); paired pointwise
95\% document-bootstrap intervals for the late-minus-early contrasts
are reported in the text.}
\label{fig:two-clocks}
\end{figure}

Let \(d\) denote the document. For each action, let \(I_t=\one\{X_t\ne Y_t\}\),
\(\delta_t=\delta_t(X_{<t},Y_{<t})\), and
\(\mathcal F_{t-1}=\sigma(d,X_{<t},Y_{<t})\).
Since \(\E[I_t\mid\mathcal F_{t-1}]=\delta_t\) and
\(\{\tau\ge t\}\in\mathcal F_{t-1}\),
\[
 \E[\delta_t\one\{\tau\ge t\}]
 =\E[I_t\one\{\tau\ge t\}]
 =\Pcal(\tau=t)=p_t.
\]
Partitioning by the entry time therefore gives
\begin{equation}
 \E[\delta_t]
 =p_t+\sum_{s<t}p_s\,\E[\delta_t\mid\tau=s],
 \label{eq:two-clock-mixture}
\end{equation}
with zero-mass terms defined as zero.
Generation-position averages weight branches of different ages by each
action's entry law. Later entry shortens exposure to the high-discrepancy
process, permitting narrowing aggregate gaps alongside high branch TV.

Stepwise maximality and $\{\tau=s\}\in\mathcal F_{s+j-1}$ give
\begin{equation}
 \begin{aligned}
 \E[I_t\mid\mathcal F_{t-1}]&=\delta_t,\\
 \E[I_{s+j}\one\{\tau=s\}]
 &=\E\!\left[\one\{\tau=s\}
       \E[I_{s+j}\mid\mathcal F_{s+j-1}]\right]
 =\E[\delta_{s+j}\one\{\tau=s\}],\\
 \E[I_{\tau+j}\mid\tau\le64]
 &=\frac{\sum_{s=1}^{64}\E[I_{s+j}\one\{\tau=s\}]}{\Pcal(\tau\le64)}
 =\E[\delta_{\tau+j}\mid\tau\le64].
 \end{aligned}
 \label{eq:branch-conditional-tv}
\end{equation}
The last line requires $\Pcal(\tau\le64)>0$ and $1\le j\le64$.

In the first study, the cohort fractions for SnapKV-512, SnapKV-half,
and recent-half are 97.18\%, 81.81\%, and 96.19\% on Llama, and
98.48\%, 86.07\%, and 98.57\% on Qwen. Their late-minus-early TV
contrasts are $6.53\,[5.74,7.36]$, $7.11\,[6.48,7.81]$, and
$5.84\,[5.11,6.58]$ pp on Llama, and $10.63\,[9.37,11.91]$,
$10.48\,[9.20,11.81]$, and $7.83\,[6.72,8.98]$ pp on Qwen.
All 36 stratum--model--action pointwise 95\% lower bounds are positive.

Local token agreement remains possible: at least 16 consecutive pre-EOS
agreements occur in 2.60\%/7.60\%/2.85\% of Llama branch trajectories
and 3.05\%/7.32\%/3.52\% of Qwen branch trajectories
(512/half/recent-half). Across these six conditions, a pre-EOS mismatch
after an earlier agreement occurs in 65.11--76.88\% of trajectories.
These fractions use the whole branch cohort as denominator.

\subsection{Persistent branch discrepancy at higher retention}
\label{sec:higher-retention}

At 90\% retention, SnapKV again enters divergence later and less often
than recent-token retention at the same prompt-cache budget. Exposure accounts for 88.6\% and
92.1\% of their 34.84-pp and 31.20-pp token-mismatch gaps on Llama and
Qwen, respectively; the shares range from 86.13\% to 95.36\%
across the 12 model--stratum comparisons.

All four prespecified branch-TV intervals lie above zero
(Figure~\ref{fig:retention-primary}a); strictly pre-EOS contrasts retain
this direction in every stratum. The study jointly evaluates the
branch contrast and the full-horizon saturation fraction
\begin{equation}
 \begin{aligned}
 B(a)&=\frac1{16}\sum_{j=49}^{64}\E[\delta_{\tau+j}^a\mid\tau\le64]
       -\frac1{16}\sum_{j=1}^{16}\E[\delta_{\tau+j}^a\mid\tau\le64],\\
 S_{.99}(a)&=H^{-1}\sum_{t=1}^H\Pcal(\delta_t^a>.99).
 \end{aligned}
 \label{eq:retention90-endpoints}
\end{equation}
The saturation fractions in Table~\ref{tab:retention90-saturation} place
the branch increases in a regime with substantial mass near the upper
endpoint of TV.

\begin{table}[t]
\centering\small
\setlength{\tabcolsep}{4pt}
\begin{tabular}{llrr}
\toprule
Model & Selector & $S_{.99}$, \% [99.375\% CI] & Cohort docs / paths\\
\midrule
Llama & SnapKV-90 & $28.44\;[26.68,30.24]$ & 288 / 4,723\\
 & recent-90 & $56.18\;[53.77,58.61]$ & 288 / 8,119\\
Qwen & SnapKV-90 & $43.63\;[40.94,46.35]$ & 286 / 5,275\\
 & recent-90 & $73.65\;[71.02,76.14]$ & 287 / 8,301\\
\bottomrule
\end{tabular}
\caption{Full-horizon TV saturation and branch-cohort support at higher
retention. These four fractions and the
four branch contrasts in Figure~\ref{fig:retention-primary}a form the
prespecified family of eight endpoints, with document-bootstrap intervals
at $\alpha=.05/8$. The last column counts documents and trajectories in
each action's fixed $\tau\le64$ cohort.}
\label{tab:retention90-saturation}
\end{table}

\begin{figure}[t]
\centering
\begin{subfigure}[b]{.485\linewidth}
\centering\begin{tikzpicture}[x=1cm,y=1cm,font=\figurefont,line cap=round]
\path[use as bounding box] (0,0) rectangle (6.7,2.5);
\draw[black!12,line width=.3pt] (2.45000,0.42000) -- (2.45000,2.40000);
\node[inner sep=0pt,] at (2.45000,0.16000) {0};
\draw[black!12,line width=.3pt] (4.22273,0.42000) -- (4.22273,2.40000);
\node[inner sep=0pt,] at (4.22273,0.16000) {5};
\draw[black!12,line width=.3pt] (5.99545,0.42000) -- (5.99545,2.40000);
\node[inner sep=0pt,] at (5.99545,0.16000) {10};
\node[inner sep=0pt,anchor=east] at (2.30000,2.15000) {Llama / SnapKV};
\draw[figexposure,line width=.8pt] (4.21284,2.15000) -- (5.00054,2.15000);
\draw[figexposure,line width=.8pt] (4.21284,2.09000) -- (4.21284,2.21000);
\draw[figexposure,line width=.8pt] (5.00054,2.09000) -- (5.00054,2.21000);
\fill[figexposure] (4.58964,2.15000) circle[radius=.06cm];
\node[inner sep=0pt,anchor=east] at (2.30000,1.65000) {Llama / Recent};
\draw[figexposure,line width=.8pt] (4.42329,1.65000) -- (5.22466,1.65000);
\draw[figexposure,line width=.8pt] (4.42329,1.59000) -- (4.42329,1.71000);
\draw[figexposure,line width=.8pt] (5.22466,1.59000) -- (5.22466,1.71000);
\fill[figexposure] (4.82220,1.65000) circle[radius=.06cm];
\node[inner sep=0pt,anchor=east] at (2.30000,1.15000) {Qwen / SnapKV};
\draw[figexposure,line width=.8pt] (4.90113,1.15000) -- (5.92313,1.15000);
\draw[figexposure,line width=.8pt] (4.90113,1.09000) -- (4.90113,1.21000);
\draw[figexposure,line width=.8pt] (5.92313,1.09000) -- (5.92313,1.21000);
\fill[figexposure] (5.38084,1.15000) circle[radius=.06cm];
\node[inner sep=0pt,anchor=east] at (2.30000,0.65000) {Qwen / Recent};
\draw[figexposure,line width=.8pt] (4.61950,0.65000) -- (5.63286,0.65000);
\draw[figexposure,line width=.8pt] (4.61950,0.59000) -- (4.61950,0.71000);
\draw[figexposure,line width=.8pt] (5.63286,0.59000) -- (5.63286,0.71000);
\fill[figexposure] (5.11031,0.65000) circle[radius=.06cm];
\end{tikzpicture}
\caption{Late minus early branch TV (pp)}
\end{subfigure}\hfill
\begin{subfigure}[b]{.485\linewidth}
\centering\begin{tikzpicture}[x=1cm,y=1cm,font=\figurefont,line cap=round]
\path[use as bounding box] (0,0) rectangle (6.7,2.5);
\draw[black!12,line width=.3pt] (1.25000,0.42000) -- (1.25000,1.95000);
\node[inner sep=0pt,] at (1.25000,0.16000) {0};
\draw[black!12,line width=.3pt] (2.66096,0.42000) -- (2.66096,1.95000);
\node[inner sep=0pt,] at (2.66096,0.16000) {10};
\draw[black!12,line width=.3pt] (4.07192,0.42000) -- (4.07192,1.95000);
\node[inner sep=0pt,] at (4.07192,0.16000) {20};
\draw[black!12,line width=.3pt] (5.48288,0.42000) -- (5.48288,1.95000);
\node[inner sep=0pt,] at (5.48288,0.16000) {30};
\node[inner sep=0pt,anchor=east] at (1.10000,1.65000) {Llama};
\path[fill=figexposure] (1.25000,1.43000) rectangle (5.01982,1.87000);
\path[fill=figrate] (5.01982,1.43000) rectangle (5.28005,1.87000);
\node[inner sep=0pt,text=white] at (3.13491,1.65000) {26.72};
\node[inner sep=0pt,anchor=west] at (5.40005,1.65000) {6.46\%};
\node[inner sep=0pt,anchor=east] at (1.10000,0.85000) {Qwen};
\path[fill=figexposure] (1.25000,0.63000) rectangle (4.29465,1.07000);
\path[fill=figrate] (4.29465,0.63000) rectangle (4.30149,1.07000);
\node[inner sep=0pt,text=white] at (2.77232,0.85000) {21.58};
\node[inner sep=0pt,anchor=west] at (4.42149,0.85000) {0.22\%};
\path[fill=figexposure] (1.20000,2.23000) rectangle (1.50000,2.38000);
\node[inner sep=0pt,anchor=west] at (1.60000,2.30000) {Pre-EOS};
\path[fill=figrate] (3.55000,2.23000) rectangle (3.85000,2.38000);
\node[inner sep=0pt,anchor=west] at (3.95000,2.30000) {EOS-related};
\end{tikzpicture}
\caption{Selector difference in $R-F$ (pp)}
\end{subfigure}
\caption{Higher retention preserves branch discrepancy; most of the selector
history-gap difference accrues before EOS. Independent 90\% study:
\(N=288,K=32\). (a) Lags 49--64 minus 1--16 in each action's
\(\tau\le64\) cohort; prespecified 99.375\% document-bootstrap intervals
for the eight-endpoint family.
(b) Recent minus SnapKV, with additive pre-EOS and EOS-related components;
end labels give EOS shares of the total. Bars and shares are descriptive
point estimates.}
\label{fig:retention-primary}
\end{figure}

The branch cohorts contain 51.25\%/88.10\% of Llama trajectories and
57.24\%/90.07\% of Qwen trajectories (SnapKV/recent). Full-horizon
divergence probabilities are 74.24\%/94.88\% and 80.88\%/97.75\%,
with conditional mean entry times of 49.61/20.19 and 47.74/21.47 steps.
Early-to-late branch TV is 82.64\% to 88.68\% and 87.44\% to 94.13\%
on Llama, and 85.87\% to 94.14\% and 89.56\% to 97.06\% on Qwen.
For comparison, first-study saturation fractions are
58.15\%/44.92\%/62.41\% on Llama and 81.23\%/65.53\%/82.48\% on Qwen
(512/half/recent-half).

\subsection{EOS and the location of discrepancy}
\label{sec:eos}

Rollout and fixed-history TV agree through the first divergent token;
\(R-F\) accumulates later. For each action, let $e_X,e_Y$ be the first
EOS output positions (infinite if absent), $e=\min(e_X,e_Y)$, and
$e'=\max(e_X,e_Y)$. Within $\{1,\ldots,H\}$, define
\[
 S_0=\{t:t<e\},\quad S_1=\{t:t=e\le H\},\quad
 S_2=\{t:e<t\le e'\},\quad S_3=\{t:e'<t\}.
\]
These sets separate pre-EOS, the first EOS output, one side absorbed,
and both sides absorbed. With fixed-history TV $f_t^a$ and
$\Delta$ denoting recent-90 minus SnapKV-90,
\begin{equation}
 G_r(a)=\frac1H\E\!\left[\sum_{t\in S_r}
       (\delta_t^a-f_t^a)\right],\qquad
 \Delta(R-F)=\sum_{r=0}^3\Delta G_r,\qquad G_3(a)=0.
 \label{eq:trajectory-eos-partition}
\end{equation}
The pre-EOS component accounts for most of the selector difference
(Figure~\ref{fig:retention-primary}b). The EOS-related component is
$\sum_{r=1}^3\Delta G_r$; its ratio to $\Delta(R-F)$ gives the
descriptive EOS share.

The total selector differences are $28.56\,[26.66,30.48]$ pp on Llama
and $21.63\,[19.58,23.66]$ pp on Qwen (pointwise 95\% intervals); their
pre-EOS components have intervals $[25.11,28.31]$ and $[19.52,23.61]$
pp. All 12 model--stratum gaps and pre-EOS components have positive
pointwise lower bounds. Llama's EOS shares range from 0.86\% to 3.68\%
on HELMET and 9.52\% to 11.65\% on RULER; Qwen's range is approximately
$-0.024\%$ to 1.25\%. A signed selector difference can have a negative
component.

For the first study, pre-EOS components of 512-minus-half
$\Delta(R-F)$ are 8.83 of 10.39 pp on Llama and 4.08 of 4.06 pp on
Qwen; recent-half minus half gives 10.65 of 12.14 pp and 4.70 of
4.70 pp. The Qwen HELMET--32K 512-minus-half estimate is
$-0.35\,[-2.84,2.05]$ pp, with its interval spanning both signs.

For a lag block $J$, the strict pre-EOS branch mean is
\[
 b_J^{\rm pre}=
 \frac{\E[\sum_{j\in J}\one\{\tau\le64,\ \tau+j<e\}\delta_{\tau+j}]}
 {\E[\sum_{j\in J}\one\{\tau\le64,\ \tau+j<e\}]},
\]
with summands zero when $\tau>64$.
At 90\% retention, $b_{49:64}^{\rm pre}-b_{1:16}^{\rm pre}$ is
$5.94\,[5.14,6.77]$ and $6.83\,[6.01,7.65]$ pp on Llama, and
$8.26\,[7.27,9.32]$ and $7.51\,[6.51,8.57]$ pp on Qwen
(SnapKV/recent; pointwise 95\%). Early/late pre-EOS step fractions
are 97.95\%/94.87\% and 98.63\%/94.54\% on Llama; RULER late fractions
are approximately 90\%--92\%. Qwen's fractions remain above
approximately 99.8\%. Each mean averages surviving steps in its own block,
whose contributing trajectory sets can differ.

\subsection{Task-dependent timing and persistent branch discrepancy}
\label{sec:task-strata}

For generation-position block $J$, write
$g_J=|J|^{-1}\sum_{t\in J}\E[\delta_t^{\rm recent90}-\delta_t^{\rm SnapKV90}]$.
The pooled values for $J=1{:}32,33{:}64,65{:}128$ are
37.89, 41.35, and 30.11 pp on Llama and 36.09, 37.63, and 25.53 pp on
Qwen. Table~\ref{tab:trajectory-stratum-branches} separates the two
clocks by stratum: $g_{65:128}-g_{1:32}$ is negative in every HELMET
model--length comparison and positive in every RULER comparison.

\begin{table}[t]
\centering\small
\setlength{\tabcolsep}{4pt}
\begin{tabular}{llrrr}
\toprule
Model & Stratum & $\Delta g$ & $B$: SnapKV [95\% CI] & $B$: recent [95\% CI]\\
\midrule
Llama & HELMET--4K & $-21.72$ & $4.21\;[2.51,6.10]$ & $6.18\;[4.73,7.74]$\\
 & HELMET--16K & $-18.14$ & $7.43\;[5.27,9.87]$ & $5.01\;[3.90,6.16]$\\
 & HELMET--32K & $-14.34$ & $8.48\;[6.69,10.18]$ & $6.28\;[4.96,7.71]$\\
 & RULER--4K & $+3.25$ & $5.01\;[3.13,7.03]$ & $7.26\;[4.36,10.38]$\\
 & RULER--16K & $+0.50$ & $5.39\;[3.59,7.52]$ & $8.54\;[6.10,11.10]$\\
 & RULER--32K & $+3.73$ & $5.49\;[4.07,7.02]$ & $7.16\;[5.18,9.16]$\\
\midrule
Qwen & HELMET--4K & $-25.84$ & $6.60\;[4.85,8.45]$ & $6.08\;[3.86,8.87]$\\
 & HELMET--16K & $-26.36$ & $11.46\;[8.19,15.12]$ & $4.86\;[3.39,6.47]$\\
 & HELMET--32K & $-19.54$ & $9.12\;[7.40,10.88]$ & $7.23\;[4.79,10.25]$\\
 & RULER--4K & $+0.27$ & $7.25\;[4.48,10.57]$ & $8.81\;[6.63,11.28]$\\
 & RULER--16K & $+0.28$ & $7.01\;[5.10,9.03]$ & $10.00\;[6.71,13.81]$\\
 & RULER--32K & $+7.84$ & $7.81\;[5.94,9.59]$ & $8.68\;[6.60,10.92]$\\
\bottomrule
\end{tabular}
\caption{Two clocks at 90\% retention, in pp. The generation-clock
column is $\Delta g=g_{65:128}-g_{1:32}$; $B$ compares
late and early lags within each selector's fixed branch cohort.
Intervals give descriptive, pointwise 95\% uncertainty.}
\label{tab:trajectory-stratum-branches}
\label{tab:trajectory-stratum-timing}
\end{table}

All 24 branch contrasts have positive pointwise lower bounds.
Their strict pre-EOS versions range from 4.18 to 11.46 pp, also with
positive lower bounds (the smallest is 2.47 pp).
In the first study, the pooled 512-minus-half generation-clock gaps
are 32.76, 24.06, and 11.37 pp on Llama and 30.56, 20.03, and 8.06 pp
on Qwen; the RULER--4K gap rises in the middle block.

\subsection{Aggregate prediction and temporal resolution}
\label{sec:aggregate-resolution}

Calibration predicts the population-average SnapKV-512 minus
SnapKV-half gap in $R$ from either fixed-history TV or the first 32
generation positions. For each model and observation, six-fold selection
on development documents
minimizes document-level paired-contrast mean
squared error over a stratum/action mean, an uncalibrated flat predictor,
and ridge regression ($\alpha=10$). Ridge uses stratum/action indicators
and either full-horizon $F$ or the replicate-averaged short-window features
\[
 \frac1K\sum_{k=1}^K
 \left(\frac1H\sum_{t=1}^{32}I_t^{(k)},\;
 \one\{\tau^{(k)}\le32\},\;
 \one\{e_X^{(k)}\le32\},\;
 \one\{e_Y^{(k)}\le32\}\right),
\]
where $e_X,e_Y$ are the first EOS positions. Inputs are centered and scaled
on training documents, with an unpenalized intercept; each action prediction
is clipped to $[0,1]$ before taking its contrast.

Using generation positions 1--32, development-selected ridge predictors
recover the population-average SnapKV-512 minus SnapKV-half gap within
the prespecified \(\pm2\)-pp margin in both models
(Table~\ref{tab:trajectory-calibration}). For error
$e=\Delta R-\widehat{\Delta R}$, equivalence requires
$\mathrm{CI}_{1-0.05/4}(e)\subset[-2,2]\ \mathrm{pp}$.
With fixed-history observations, Qwen's selected cell-mean predictor
satisfies this criterion using stratum/action identity alone; Llama's
ridge interval spans values inside and above the equivalence margin.

\begin{table}[t]
\centering\small
\setlength{\tabcolsep}{4pt}
\begin{tabular}{llrl}
\toprule
Model and observation & Predictor & Error, pp [98.75\% CI] & Decision\\
\midrule
Llama, fixed history & Ridge & $1.127\;[-0.304,2.570]$ & Unresolved\\
Llama, positions 1--32 & Ridge & $-0.388\;[-1.062,0.301]$ & Equivalent\\
Qwen, fixed history & Cell mean & $0.370\;[-1.162,1.965]$ & Equivalent\\
Qwen, positions 1--32 & Ridge & $0.114\;[-0.867,1.135]$ & Equivalent\\
\bottomrule
\end{tabular}
\caption{Development-selected aggregate predictions on independent
documents. Equivalence concerns the six-stratum population mean and
the stated action pair, with a prespecified $\pm2$-pp margin.}
\label{tab:trajectory-calibration}
\end{table}

The short-window predictors have document-contrast MAEs of 4.09 pp on
Llama and 4.92 pp on Qwen. Their inputs use generation positions 1--32;
the theoretical depth \(L\) begins at divergence.
Short-window document-contrast
RMSE against the finite-$K$ targets is 5.16 and 7.09 pp on Llama and
Qwen, respectively. Fixed-history predictors give MAE/RMSE of
8.73/11.04 and 10.03/13.18 pp. These errors summarize document-level paired contrasts.

Table~\ref{tab:trajectory-uncalibrated} compares the corresponding
uncalibrated summaries.

\begin{table}[t]
\centering\small
\begin{tabular}{lrr}
\toprule
Uncalibrated quantity, pp & Llama & Qwen\\
\midrule
Full-$H$ rollout-TV gap & 19.89 & 16.68\\
Fixed-history gap & 9.50 & 12.63\\
Positions 1--32 flat-extrapolation gap & 32.78 & 30.57\\
Full minus fixed history & $10.39\;[9.26,11.55]$ & $4.06\;[2.60,5.46]$\\
Full minus flat extrapolation & $-12.89\;[-13.97,-11.83]$ & $-13.89\;[-15.39,-12.40]$\\
\bottomrule
\end{tabular}
\caption{Prespecified supporting contrasts for SnapKV-512 minus
SnapKV-half, with pointwise 95\% intervals. Flat extrapolation uses
mean token mismatch at positions 1--32.}
\label{tab:trajectory-uncalibrated}
\end{table}

\section{Discussion and Conclusion}
\label{sec:conclusion}

The trajectory decomposition gives a temporal interpretation of the
benefits of KV retention. In the first study, post-divergence exposure
accounts for 85--90\% of four aggregate mismatch gaps, including
equal-budget selector comparisons. Lower cumulative mismatch is associated
chiefly with later and less frequent entry into divergence, while
discrepancy remains high within divergent branches. The independent 90\%
study strengthens this account: all four prespecified late-minus-early
branch-TV contrasts are positive, ranging from 6.03 to 8.27 pp.
EOS decompositions locate most of the selector history-gap difference
before termination.

The two time axes explain how these observations coexist.
Equation~\eqref{eq:two-clock-mixture} expresses generation-position TV as
a mixture over entry times and branch ages. At 90\% retention,
generation-clock gaps narrow on HELMET and widen on RULER, while
late-window branch TV increases across all studied strata.
Entry-aligned measurements thus reveal a shared pattern of persistent
discrepancy beneath task-dependent aggregate dynamics.

The theory makes this temporal resolution precise. A finite
divergence-aligned profile determines an observed contribution \(R_L\)
and remaining capacity \(B_L\); the constructive sharpness result
characterizes every compatible cumulative risk over unrestricted kernel
pairs. Residual-branch conditional measurement jointly estimates
occurrence, occupation, and the realized tail with an exact variance
comparison. These results connect what a trajectory measurement observes
to the cumulative quantity it aims to describe.

Aggregate prediction provides a complementary use of early observations.
Predictors calibrated on development documents and using generation
positions 1--32 recover the specified
population-average SnapKV-512 minus SnapKV-half contrast within the
prespecified \(\pm2\)-pp margin in both models. Document-level contrast
MAEs are 4.09 and 4.92 pp, distinguishing the resolution of population
averages from that of individual documents. In the studied setting,
calibrated early observations summarize the mean effect, while complete
trajectories reveal its composition in divergence timing, exposure,
and subsequent mismatch. Reporting these components gives cumulative
comparisons a direct temporal interpretation.

\section*{AI use}
AI assistants supported code development, literature search, and
checks of mathematical arguments. The authors are responsible for the
claims, citations, analyses, and final presentation.

\clearpage
\begingroup
\fontsize{10}{11.5}\selectfont
\setlength{\bibsep}{2pt plus 0.5pt}
\raggedright
\bibliography{references,iclr2027-additional-references}
\bibliographystyle{plainnat}
\endgroup

\end{document}